%% file: main.tex
\documentclass[numbers]{article}

\input{preamble}

\begin{document}
\input{defs}

\twocolumn[
  \icmltitle{Collapsed Effective Operators for Higher-order Structures}

  \icmlsetsymbol{equal}{*}
  \icmlsetsymbol{supervision}{\textdagger}

  \begin{icmlauthorlist}
    \icmlauthor{Maximilian Krahn}{equal,1}
    \icmlauthor{Lennart Bastian}{equal,1,3,4}
    \icmlauthor{Vikas Garg}{supervision,2,5}
    \icmlauthor{Björn Schuller}{supervision,1,3,4}
    \icmlauthor{Tolga Birdal}{supervision,1}
  \end{icmlauthorlist}

  \icmlaffiliation{1}{Department of Computing, Imperial College London, London, UK}
  \icmlaffiliation{2}{Aalto University, Finland}
  \icmlaffiliation{3}{Chair of Health Informatics, Technical University of Munich, Germany}
  \icmlaffiliation{4}{Munich Center for Machine Learning, Germany}
  \icmlaffiliation{5}{YaiYai Ltd}

  \icmlcorrespondingauthor{Maximilian Krahn}{max.krahn22@imperial.ac.uk}
  \icmlcorrespondingauthor{Lennart Bastian}{l.bastian@imperial.ac.uk}

  \icmlkeywords{Machine Learning, ICML}

  \vskip 0.3in
]

\printAffiliationsAndNotice{%
  \textsuperscript{*}Equal Contribution. \quad
  \textsuperscript{\textdagger}Equal senior-authorship.
}

\begin{abstract}
    Higher-order structures are powerful relational modeling tools, yet existing spectral operators decompose the topology into separate ranks, leaving practitioners to fuse the information back to vertices through ad hoc choices.
    We introduce \textit{Collapsed Effective Operators}, which condense higher-order degrees of freedom into a single vertex-level operator via Schur complementation of a graded Laplacian. 
    This yields a (generally dense) operator that encodes long-range interactions mediated by topology and is applicable to arbitrary higher-order constructs.
    We show it preserves positive semi-definiteness with a spectral upper bound relative to the rank-0 Hodge Laplacian, effectively lowering system energy under higher-order connectivity. 
    Empirically, our operator improves spectral clustering, signal smoothing and enables the inclusion of topological features in neural network architectures via positional encoding. The project page can be found \href{http://circle-group.github.io/research/CollapsedEffectiveOperators}{here}.
\end{abstract}
\input{sections/0_intro}

\input{sections/2_background}

\input{sections/3_method}

\input{sections/4_proofs}
\input{sections/5_experiments_schur}

\input{sections/1_related_work}

\input{sections/6_conclusion}

\bibliography{main}
\bibliographystyle{icml2026}

\newpage
\appendix
\onecolumn

\input{sections/appendix}
\end{document}

%% file: preamble.tex
\usepackage{amsmath}

\usepackage{hyperref}

\usepackage[capitalize]{cleveref}

\usepackage[accepted]{icml2026}
\usepackage{textcomp}
\usepackage[T1]{fontenc}
\usepackage[utf8]{inputenc}

\usepackage[font=small,labelfont=bf]{caption}
\usepackage{adjustbox}
\usepackage{amsthm}
\usepackage{aliascnt}
\usepackage{algorithm}
\usepackage{algorithmic}
\usepackage{placeins}

\usepackage{enumitem}

\usepackage{newtxmath}

\usepackage{wrapfig}
\usepackage[T1]{fontenc}    %
\usepackage{url}            %
\usepackage{booktabs}       %
\usepackage{amsmath}
\usepackage{amsfonts}       %
\usepackage{nicefrac}       %
\usepackage{microtype}      %
\usepackage{xcolor}         %
\usepackage{subcaption}
\usepackage{graphicx} 
\usepackage{comment} 
\usepackage{etoolbox}
\usepackage{cancel}
\usepackage{soul}
\usepackage{multirow}
\usepackage{float}
\usepackage{wrapfig}
\usepackage{adjustbox}
\usepackage{pifont}
\usepackage{makecell}

\usepackage{mathtools}

\DeclarePairedDelimiterX\braket[2]{\langle}{\rangle}{#1\,\delimsize\vert\,\mathopen{}#2}

\usepackage{tabularx}
\definecolor{lb}{RGB}{31,119,180}
\usepackage{xcolor}
\usepackage{tcolorbox}
\newtcolorbox{mybox}[1]{colback=lb!5!white,colframe=lb!70!black,fonttitle=\bfseries,title=#1}
\usepackage{colortbl}
\tcbuselibrary{skins}
\tcbset{tab2/.style={enhanced,fonttitle=\bfseries,
colback=white,colframe=gray,colbacktitle=white,
coltitle=black,center title}}
\usepackage{pifont}

\usepackage{tikz}
\usetikzlibrary{shapes.geometric, arrows.meta, positioning, fadings, calc, decorations.pathmorphing}

\renewcommand{\paragraph}[1]{{\vspace{0.35mm}\noindent \bf #1}.}

\theoremstyle{plain}
\newtheorem{theorem}{Theorem}[section]
\newtheorem{proposition}[theorem]{Proposition}

\newtheorem{corollary}[theorem]{Corollary}

\theoremstyle{definition}
\newaliascnt{definition}{theorem}
\newtheorem{definition}[definition]{Definition}
\aliascntresetthe{definition}
\crefalias{definition}{definition}

\newaliascnt{assumption}{theorem}

\aliascntresetthe{assumption}
\crefalias{assumption}{assumption}

\theoremstyle{definition}
\newaliascnt{remark}{theorem}
\newtheorem{remark}[remark]{Remark}
\aliascntresetthe{remark}
\crefalias{remark}{remark}

\theoremstyle{definition}
\newaliascnt{example}{theorem}
\newtheorem{example}[example]{Example}
\aliascntresetthe{example}
\crefalias{example}{example}

\newtheorem*{theorem*}{Theorem}

\crefname{theorem}{thm.}{thms.}
\Crefname{theorem}{Thm.}{Thms.}
\crefname{definition}{def.}{defs.}
\Crefname{definition}{Def.}{Defs.}
\crefname{proposition}{prop.}{props.}
\Crefname{proposition}{Prop.}{Props.}
\crefname{lemma}{lem.}{lems.}
\Crefname{lemma}{Lem.}{Lems.}
\crefname{corollary}{cor.}{cors.}
\Crefname{corollary}{Cor.}{Cors.}
\crefname{remark}{rem.}{rems.}
\Crefname{remark}{Rem.}{Rems.}
\crefname{assumption}{asm.}{asms.}
\Crefname{assumption}{Asm.}{Asms.}
\crefname{example}{ex.}{exs.}
\Crefname{example}{Ex.}{Exs.}
\crefname{section}{sec.}{secs.}
\Crefname{Section}{Sec.}{Secs.}
\crefname{figure}{fig.}{figs.}
\Crefname{figure}{Fig.}{Figs.}

%% file: defs.tex
\newcommand{\MK}[1]{{\color{red} MK: #1}}
\newcommand{\MKDONE}[1]{{\color{black}#1}}
\newcommand{\REV}[1]{{\color{green} REVIEWER: #1}}

\newcommand{\LB}[1]{{\color{orange} LB: #1}}
\newcommand{\tolga}[1]{{\color{purple} TB: #1}}

\newcommand{\bh}{\mathbf{h}}
\newcommand{\bc}{\mathbf{c}}
\newcommand{\bU}{\mathbf{U}}
\newcommand{\bj}{\mathbf{j}}
\newcommand{\bu}{\mathbf{u}}
\newcommand{\br}{\mathbf{r}}
\newcommand{\bb}{\mathbf{b}}
\newcommand{\bq}{\mathbf{q}}
\newcommand{\bs}{\mathbf{s}}
\newcommand{\bx}{\mathbf{x}}
\newcommand{\x}{\mathbf{x}}
\newcommand{\xreal}{\mathbf{\bar{x}}}
\newcommand{\yreal}{\mathbf{\bar{y}}}
\newcommand{\Xreal}{\mathbf{\bar{X}}}
\newcommand{\Yreal}{\mathbf{\bar{Y}}}
\newcommand{\bR}{\mathbf{R}}
\newcommand{\bB}{\mathbf{B}}
\newcommand{\bJ}{\mathbf{J}}
\newcommand{\y}{\mathbf{y}}
\newcommand{\X}{\mathbf{X}}
\newcommand{\bPi}{\mathbf{\Pi}}
\newcommand{\Y}{\mathbf{Y}}
\newcommand{\J}{\mathbf{J}}
\newcommand{\A}{\mathbf{A}}
\newcommand{\Id}{\mathbf{I}}
\newcommand{\D}{\mathbf{D}}
\newcommand{\by}{\mathbf{y}}
\newcommand{\B}{\mathbb{B}}
\newcommand{\g}{\mathbb{g}}
\newcommand{\z}{\mathbf{z}}
\newcommand{\bz}{\mathbf{z}}
\newcommand{\bK}{\mathbf{K}}
\newcommand{\bZ}{\mathbf{Z}}
\newcommand{\bw}{\mathbf{w}}

\newcommand{\bW}{\mathbf{W}}
\newcommand{\bH}{\mathbf{H}}
\newcommand{\bQ}{\mathbf{Q}}
\newcommand{\Q}{\mathbf{Q}}
\newcommand{\rbox}{\mathbb{B}}
\newcommand{\vel}{\bm{\omega}}
\newcommand{\upbnd}{\overline{C}}
\newcommand{\be}{\mathbf{e}}
\newcommand{\cE}{\mathcal{E}} 
\newcommand{\cO}{\mathcal{O}} 
\newcommand{\cB}{\mathcal{B}} 
\newcommand{\cF}{\mathcal{F}}
\newcommand{\cD}{\mathcal{D}}
\newcommand{\cI}{\mathcal{I}}
\newcommand{\cL}{\mathcal{L}}
\newcommand{\cG}{\mathcal{G}}
\newcommand{\cM}{\mathcal{M}}
\newcommand{\cN}{\mathcal{N}}
\newcommand{\bM}{\mathbf{M}}
\newcommand{\cT}{\mathcal{T}}
\newcommand{\cX}{\mathcal{X}}
\newcommand{\cY}{\mathcal{Y}}
\newcommand{\cW}{\mathcal{W}} 
\newcommand{\bT}{\mathbf{T}}
\newcommand{\bv}{\mathbf{v}}
\newcommand{\cV}{\mathcal{V}}
\newcommand{\cS}{\mathcal{S}}
\newcommand{\au}{\vec{\mathbf{u}}}
\newcommand{\bA}{\mathbf{A}}
\newcommand{\bp}{\mathbf{p}}
\newcommand{\bzero}{\mathbf{0}}
\newcommand{\btheta}{\bm{\theta}}
\newcommand{\bI}{\mathbf{I}}
\newcommand{\Z}{\mathbf{Z}}
\newcommand{\pars}{\bm{\theta}}
\newcommand{\W}{\mathbf{W}}
\newcommand{\bG}{\mathbf{G}}
\newcommand{\bg}{\mathbf{g}}
\newcommand{\bbv}{\mathcal{V}}
\newcommand{\vR}{\mathbb{R}}
\newcommand{\C}{\mathbb{C}}
\newcommand{\bomega}{\bm{\omega}}
\newcommand{\bsigma}{\bm{\sigma}}
\newcommand{\bSigma}{\bm{\Sigma}}
\newcommand{\bOmega}{\bm{\Omega}}
\newcommand{\fW}{{f_{\mathbf{W}}}}

%% file: sections/0_intro.tex
\section{Introduction}

The collective behavior of complex systems, from protein complexes and chemical reactions, to neural circuits and collaborative networks, is fundamentally \textit{polyadic}: it emerges from the simultaneous interaction of multiple components~\cite{battiston2020networks, bick2023higher}.
While graphs have emerged as a universal language for modeling relational data, they are fundamentally limited through \textit{dyadic} edges and cannot represent higher-order interactions directly.
This insight has motivated advances in algorithms that operate on higher-order constructs which explicitly incorporate polyadic relations, such as hypergraphs, simplicial complexes, and combinatorial complexes~\cite{gao2020hypergraph, barbarossa2020topological, hajij2022topological} (cf.\ \Cref{fig:ccExamples}).

\begin{figure*}[t]
    \centering
    \includegraphics[width=0.95\linewidth]{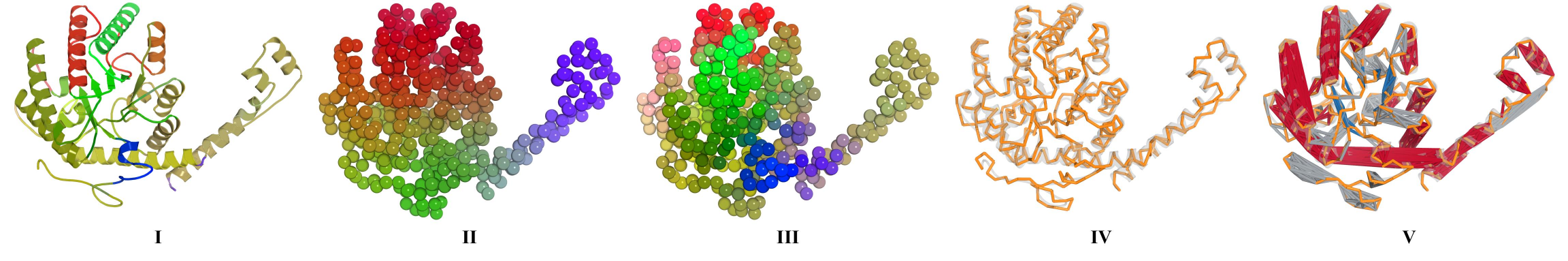}
\caption{Our \textit{Collapsed effective operators} reveal secondary-structure-aware spectral modes. On protein 1A0C (437 residues) from Topotein~\cite{wang2025topotein}, we color residues by the leading spectral embedding (top-3 joint-PCA components of the leading eigenvectors) of our collapsed operator $\mathbf{S}$ versus the graph Laplacian $\mathbf{L}^{G}$, aligned so matching modes share colors. \textbf{Left to right:} (I)~the secondary structure, shown as helical ribbons and $\beta$-strand arrows; (II)~$\mathbf{L}^{G}$ produces smoother, backbone-sequential modes; (III) modes from $\mathbf{S}$ are organized around secondary-structure elements (SSEs); (IV)~the backbone graph underlying $\mathbf{L}^{G}$; and (V)~$\mathbf{S}$ as a vertex-level operator, with SSE cells collapsed into within-cell couplings on the backbone and colored by class (helix red, sheet blue, coil grey).}
    \label{fig:teaser}
    \vspace{-5.0mm}
\end{figure*}

Spectral graph theory connects the algebraic properties of the graph Laplacian to geometric and topological characteristics of the underlying structure~\cite{chung1997spectral}.
The eigenvalues and eigenvectors of the Laplacian govern diffusion processes, define notions of smoothness, and enable tools for clustering, filtering, and representation learning~\cite{shuman2013emerging}.
This perspective underpins Graph Neural Networks, which propagate information along edges to learn node and graph-level representations~\cite{kipf2016semi, velivckovic2017graph}.
Analogous spectral theory on higher-order structures has been developed within algebraic topology and topological signal processing, building on the Hodge Laplacian~\cite{eckmann1944harmonische, lim2020hodge} and its variants/extensions ~\cite{barbarossa2020topological, schaub2020random,vigano2026root}, and has recently been incorporated into neural architectures, spawning the field of Topological Deep Learning~\cite{bodnar2021weisfeiler, hajij2022topological, papamarkou2024position}. 
Yet across all of these settings, leveraging higher-order spectral information for node-level tasks has proven challenging.

Despite the expressive power of imposing higher-order relationships on data, the signal of interest in most applications \textbf{lives on a specific, single rank}.
For instance, node classification~\cite{sen2008collective, yang2016revisiting}, molecular property prediction~\cite{gilmer2017neural, schutt2017schnet}, and spectral clustering~\cite{shi2000normalized} all require vertex-level predictions.
Yet topological information encoded in higher-order cells does not directly transfer to vertices, creating a persistent \emph{fusion problem}: how should rank-specific representations be combined to inform node-level predictions?
Classical approaches to hypergraph clustering~\cite{zhou2006learning} and motif-based analysis~\cite{benson2016higher} sidestep this by operating on higher-order cells and then projecting or compressing the results onto the vertex set using application-specific heuristics.
Higher-order message-passing (HOMP) methods take a different route, propagating features from nodes to higher-order cells and back~\cite{bodnar2021weisfeiler, hajij2022topological}.
In both paradigms, fusing multi-rank information into a coherent vertex-level signal requires domain knowledge and architectural decisions that may not generalize across tasks \cite{papillon2024topotune}.

Several spectral operators for higher-order structures have been conceived.
The Hodge Laplacian~\cite{eckmann1944harmonische, lim2020hodge} generalizes the graph Laplacian to $k$-dimensional cells, producing a separate operator $\mathbf{\Delta}_k$ at each rank that couples adjacent ranks $k \leftrightarrow k \pm 1$.
While mathematically elegant, this rank-wise decomposition creates a practical bottleneck: fusing information across ranks back to the vertex level remains application-dependent, often requiring \textbf{handcrafted aggregation schemes or domain-specific heuristics}.
Extensions such as the Dirac operator~\cite{calmon2023dirac}, persistent Laplacians~\cite{memoli2022persistent}, and the multi-order Laplacian~\cite{lucas2020multi} address various limitations, yet all share a common shortcoming: they either operate rank-by-rank or aggregate ranks additively without accounting for how higher-order cells mediate vertex dynamics.
None yields a node-level operator that encodes the complete influence of the higher-order structure through a single, principled construction.

To address this gap, we propose \textit{Collapsed Effective Operators}. 
Our approach is inspired by the physics of effective theories, where complex systems are simplified \textbf{by integrating out degrees of freedom that are not directly observable} or of primary interest~\cite{bell1974nonlinear,feshbach1958unified}. We apply this philosophy to signals on topological structures: rather than explicitly computing dynamics across all cell ranks, we collapse the higher-order structure onto the vertex set via the Schur complement \cite{boyd2004convex}, yielding an operator that acts solely on nodes (see \cref{fig:teaser}).
Unlike Laplacians, this operator may be locally dense, reflecting non-local couplings induced by higher-order topology, and its structure depends highly on the specific topological choices made in constructing the complex. 
However, we demonstrate that this is not a limitation but a feature: the effective operator encodes \textbf{how higher-order modeling assumptions influence vertex-level dynamics.}
In summary, we \textbf{contribute}:
\begin{itemize}[noitemsep,leftmargin=*,topsep=0.01em]
\item We derive an \textit{effective operator} on vertices that captures the influence of higher-order structures
\item We show that our devised operator is positive semidefinite and admits a clear spectral interpretation, reducing the effective conductance in nodes that share common cells.
\item Our operator \textbf{overcomes rank-constrained limitations}, aggregating topological features such as isolated cells down to the node-level, which existing Laplacians cannot easily model.
We provide an algorithm to efficiently compute this operation, using regularized solvers that leverage the sparse block structure in the lifted space.
\item We \textbf{validate our method across diverse tasks}, showing that it acts as a topologically selective filter that smooths noise while preserving higher-order structure, improves spectral clustering, recovers interleaved protein secondary structures invisible to geometric methods, and achieves compelling accuracy on molecular benchmarks.
\end{itemize}

%% file: sections/2_background.tex
\section{Background and Motivation}

We review the necessary background on graphs and higher-order complexes, highlighting the limitations of rank-constrained operators, such as higher-order Laplacians, to motivate the effective operator construction in \cref{sec:method}.

\subsection{Graphs and Higher-Order Structures}

\textbf{Notation.} We consider an undirected graph $G=(V, E)$ consisting of a finite set of $n$ vertices $V=\{v_1, \dots, v_n\}$ and a set of $m$ edges $E \subseteq V \times V$. 
To represent the graph algebraically, we fix an arbitrary reference orientation on each edge yielding the signed incidence matrix $\mathbf{E} \in \{-1,0,1\}^{n \times m}$, defined by $E_{ij}=1$ if vertex $v_i$ is the head (target) of edge $e_j$, $E_{ij}=-1$ if $v_i$ is the tail (source) of $e_j$, and $0$ otherwise. For an undirected graph, the orientation is arbitrary. 
Let $\mathbf{L}^\mathbf{G} = \mathbf{E}\mathbf{E}^\top$ denote the graph Laplacian, a positive semi-definite (PSD) operator on $G$.

Next, we introduce the cellular setting in which we formulate the main construction.
We use CW complexes as the default construct; they are broad enough to allow flexible cells and attaching maps, while still containing simplicial complexes, which are broadly used across computational topology, geometry processing, discrete exterior calculus, mesh analysis, and topological deep learning \citep{wardetzky2007discrete,schaub2020random,bodnar2021weisfeiler,hajij2022topological}.
\Cref{fig:ccExamples} contextualizes these structures within the broader field of topological domains.

\begin{figure}[t]
    \centering
    \includegraphics[width=\linewidth]{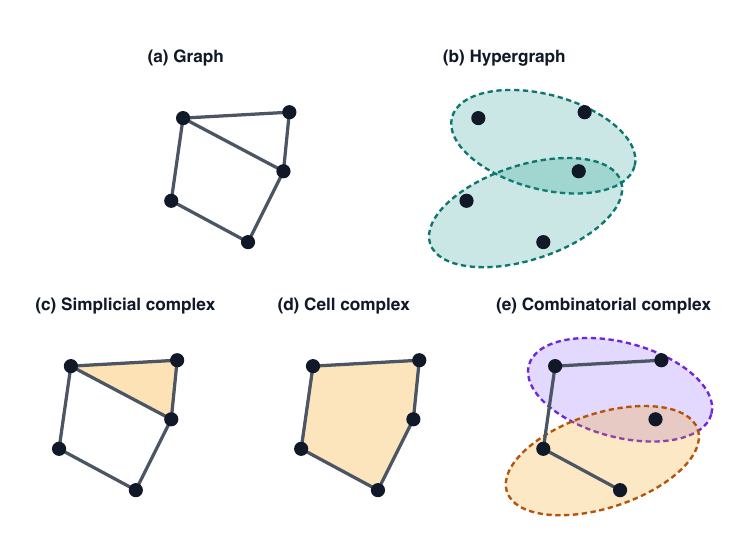}
    \caption{Topological domains of increasing flexibility.
    \textbf{(a)} \emph{Graph}: pairwise edges.
    \textbf{(b)} \emph{Hypergraph}: edges of arbitrary size, all flat (no rank).
    \textbf{(c)} \emph{Simplicial complex}: higher-order simplices, each determined by its faces.
    \textbf{(d)} \emph{Cell complex}: general cells whose boundaries need not be simplices.
    \textbf{(e)} \emph{Combinatorial complex}: possibly overlapping cells with explicit rank.}
    \label{fig:ccExamples}
    \vspace{-5.0mm}
\end{figure}

\begin{definition}[CW Complex]
\label{def:cw_complex}
A finite \emph{CW complex} $X$ is constructed inductively by attaching cells.
Starting with a 0-skeleton $X^0$ (a finite set of vertices), the $k$-skeleton $X^k$ is formed from $X^{k-1}$ by attaching $k$-cells $e^k_\alpha$ via attaching maps $\varphi_\alpha:S^{k-1}\to X^{k-1}$, where $S^{k-1}=\partial D^k$ is the boundary sphere of a $k$-disk.
The full complex is $X=\bigcup_{k=0}^K X^k$, and the \emph{rank} or \emph{grade} of a cell is its dimension $k$.
Unlike simplicial complexes, CW complexes allow flexible attaching maps: a cell need not be a simplex, and a boundary can wrap around the lower skeleton non-locally or with multiplicity \citep{hatcher2002algebraic}.
\end{definition}

A simplicial complex is the special case where the cells are simplices and the attaching maps glue each simplex to its faces.
Equivalently, it is a collection of non-empty finite subsets of a vertex set that is closed under taking non-empty subsets \citep[Ch.~3]{edelsbrunner2010computational}.
Thus the CW-complex formulation below includes the standard simplicial setting without requiring separate notation.

\begin{definition}[Cellular Cochains and Grading]
\label{def:graded_cochains}
Let $X_k$ denote the finite set of $k$-cells of $X$, with $n_k=|X_k|$.
The cellular chain space $C_k(X)\cong\mathbb{R}^{n_k}$ is the vector space spanned by oriented $k$-cells.
The cellular cochain space
\begin{equation}
C^k(X):=\operatorname{Hom}(C_k(X),\mathbb{R}) \cong \mathbb{R}^{n_k}
\end{equation}
is the space of real-valued signals on $k$-cells.
The \emph{graded cochain space} is the direct sum
\begin{equation}
C^\bullet(X):=\bigoplus_{k=0}^{K}C^k(X),
\end{equation}
so a graded cochain $\mathbf{x}\in C^\bullet(X)$ decomposes as $\mathbf{x}=(\mathbf{x}_0,\ldots,\mathbf{x}_K)$, where $\mathbf{x}_k\in C^k(X)$ is the rank-$k$ component, and thus
essentially an aggregation across ranks.
\end{definition}

\begin{definition}[Boundary and Coboundary Maps]
\label{def:boundary_operators}
The cellular boundary map $\partial_k:C_k(X)\to C_{k-1}(X)$ records the signed incidence of each oriented $k$-cell with the $(k-1)$-cells in its boundary.
Its matrix representation is $\mathbf{B}_k\in\mathbb{R}^{n_{k-1}\times n_k}$.
The adjoint coboundary $d_{k-1}=\partial_k^\ast:C^{k-1}(X)\to C^k(X)$ is represented by $\mathbf{B}_k^\top$ under the standard inner product.
Boundary maps satisfy
\begin{equation}
\partial_{k-1}\circ\partial_k=0 \quad \text{for all } k,
\end{equation}
signifying that ``the boundary of a boundary is empty'' \citep[Lemma 2.1]{hatcher2002algebraic}.
\end{definition}

\begin{definition}[Hodge Laplacian]
\label{def:hodge_laplacian}
Let $X$ be a finite CW complex with boundary matrices $\mathbf{B}_k$.
The \emph{$k$-th Hodge Laplacian} is the rank-local operator
\begin{equation}
\mathbf{\Delta}_k=\underbrace{\mathbf{B}_k^\top\mathbf{B}_k}_{\mathbf{L}_k^{\mathrm{down}}}+\underbrace{\mathbf{B}_{k+1}\mathbf{B}_{k+1}^\top}_{\mathbf{L}_k^{\mathrm{up}}},
\end{equation}
where $\mathbf{L}_k^{\mathrm{down}}$ couples $k$-cells through shared $(k-1)$-faces and $\mathbf{L}_k^{\mathrm{up}}$ couples $k$-cells that are cofaces of a common $(k+1)$-cell \citep{eckmann1944harmonische}.
\end{definition}

The Hodge Laplacian is defined separately at each rank of the grading.
This rank-locality is mathematically natural, but it leaves vertex-level tasks with the problem of aggregating information from several different cochain spaces.

\subsection{Higher-order Operators}

Just as the graph Laplacian enables spectral analysis and signal processing on graph vertices, Hodge Laplacians enable diffusion and signal processing on each rank of a CW complex.
However, they remain rank-local: $\mathbf{\Delta}_k$ acts on $C^k(X)$, while node-level tasks require an operator on $C^0(X)$ that still reflects the influence of higher ranks.
The literature, therefore, lacks a unified operator that maintains the boundary-mediated structure of a complex while aggregating multi-rank information to a chosen rank.
\begin{wrapfigure}[4]{r}{0.40\columnwidth}
\vspace{-4mm}
\centering
\begin{tikzpicture}[
    scale=0.7,
    transform shape,
    >=Stealth,
    node distance=1.0cm and 0.8cm,
    vertex/.style={
        rectangle,
        rounded corners=3pt,
        draw=black,
        fill=blue!10,
        inner sep=4pt,
        minimum width=1.5cm,
        font=\small\sffamily
    },
    mediator/.style={
        rectangle,
        rounded corners=3pt,
        draw=black,
        fill=red!5,
        inner sep=4pt,
        minimum width=2.2cm,
        font=\small\sffamily
    },
    mediated_edge/.style={
        draw=red!70!black,
        dashed,
        line width=0.8pt,
        ->
    }
]
    \node[vertex] (vi) {Vertex $i$};
    \node[mediator, above right=0.8cm and -0.2cm of vi] (ef) {Edges, Faces};
    \node[vertex, below right=0.8cm and -0.2cm of ef] (vj) {Vertex $j$};
    \draw[thick, ->] (vi) -- (vj)
        node[midway, below=2pt, font=\small] {direct};
    \draw[mediated_edge] (vi) -- (ef);
    \draw[mediated_edge] (ef) -- (vj);
    \node[right=0.1cm of ef, color=red!70!black, font=\footnotesize, align=left]
        {mediated};
\end{tikzpicture}
\end{wrapfigure}
\textit{Collapsed Effective Operators} address this gap, providing a mechanism for higher-order connections to mediate the flow of information at a specific rank, as we show next.

%% file: sections/3_method.tex
\section{Collapsed Higher-Order Operators}
\label{sec:method}

We now outline our approach for \textit{Collapsed Effective Operators}, which marginalize higher-order cells onto vertices via the Schur complement \cite{boyd2004convex} (see \cref{subsec:schur-complement}).
This construction preserves positive semi-definiteness, admits a clear energy interpretation, and can be applied efficiently with controlled regularization error.

\subsection{The Graded Laplacian}
We formulate the construction on the graded cochain space $C^\bullet(X)$ of a finite CW complex $X$ (\cref{def:graded_cochains}).
Thus $C^k(X)\cong\mathbb{R}^{n_k}$ carries real-valued signals on $k$-cells and a graded signal decomposes as $\mathbf{x}=(\mathbf{x}_0,\ldots,\mathbf{x}_K)$ with $\mathbf{x}_k\in C^k(X)$.
This grading is the natural setting for cross-rank operators and underlies the topological Dirac operator~\citep{bianconi2021topological, calmon2023dirac}, which couples adjacent ranks via boundary maps but is not reducible to the vertex space.
Inspired by this perspective, we define an operator that couples all ranks simultaneously.
Throughout the main exposition, $\mathbf{B}_k\in\mathbb{R}^{n_{k-1}\times n_k}$ denotes the signed cellular boundary matrix between ranks $k$ and $k-1$.

\begin{proposition}[Graded Laplacian]
\label{def:graded_laplacian}
Let $\{\mathbf{B}_k\}_{k=1}^K$ be boundary matrices between adjacent ranks, with adjacency weights $\beta_k \geq 0$ and coupling weights $\gamma_k \geq 0$. The \emph{Graded Laplacian} $\mathbf{L}^{\star}: C^\bullet \to C^\bullet$ is the symmetric operator:
\begin{equation}
\mathbf{L}^{\star} = 
\begin{pmatrix}
\mathbf{D}_0 & -\gamma_0 \mathbf{B}_1 & & \\
-\gamma_0 \mathbf{B}_1^\top & \mathbf{D}_1 & \ddots & \\
& \ddots & \ddots & -\gamma_{K-1} \mathbf{B}_K \\
& & -\gamma_{K-1} \mathbf{B}_K^\top & \mathbf{D}_{K}
\end{pmatrix}
\label{eq:graded_laplacian}
\end{equation}
with diagonal blocks as weighted rank-local Laplacians:
\begin{equation}
\mathbf{D}_k = \beta_{k} \mathbf{B}_{k}^\top \mathbf{B}_{k} + \beta_{k+1} \mathbf{B}_{k+1} \mathbf{B}_{k+1}^\top,
\end{equation}
and $\mathbf{B}_0 = \mathbf{B}_{K+1} = \mathbf{0}$ since no cells exist below rank $0$ or above rank $K$. 
When $\beta_k = 1$ for all ranks, these diagonal blocks reduce to the Hodge Laplacians: $\mathbf{D}_k = \mathbf{\Delta}_k$.
\end{proposition}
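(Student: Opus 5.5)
The proposition bundles three claims, and I would verify each directly from the block form \eqref{eq:graded_laplacian}: that $\mathbf{L}^{\star}$ is a well-defined operator $C^\bullet\to C^\bullet$, that it is symmetric, and that $\mathbf{D}_k=\mathbf{\Delta}_k$ when every $\beta_k=1$.

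\textbf{Well-definedness.} Writing $\mathbf{x}=(\mathbf{x}_0,\ldots,\mathbf{x}_K)$, the block in position $(k,k+1)$ is $-\gamma_k\mathbf{B}_{k+1}\in\mathbb{R}^{n_k\times n_{k+1}}$. It sends $C^{k+1}(X)\cong\mathbb{R}^{n_{k+1}}$ into $C^k(X)$. The block in position $(k+1,k)$ is $-\gamma_k\mathbf{B}_{k+1}^\top$, which is the coboundary $d_k$ of \cref{def:boundary_operators} scaled by $-\gamma_k$ and sends $C^k$ into $C^{k+1}$. The diagonal block $\mathbf{D}_k$ is $n_k\times n_k$, because $\mathbf{B}_k^\top\mathbf{B}_k$ and $\mathbf{B}_{k+1}\mathbf{B}_{k+1}^\top$ are both $n_k\times n_k$. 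So every block has compatible dimensions and $\mathbf{L}^{\star}$ is a square matrix of size $\sum_k n_k$ acting on $C^\bullet(X)$.

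\textbf{Symmetry.} This reduces to two facts.
\begin{itemize}
\item Each diagonal block is symmetric. Since $(\mathbf{B}^\top\mathbf{B})^\top=\mathbf{B}^\top\mathbf{B}$ and $(\mathbf{B}\mathbf{B}^\top)^\top=\mathbf{B}\mathbf{B}^\top$, and the weights $\beta_k$ are real scalars, $\mathbf{D}_k^\top=\mathbf{D}_k$.
\item The off-diagonal blocks are transposes of one another. The block in position $(k+1,k)$ is $(-\gamma_k\mathbf{B}_{k+1})^\top$. Every other off-diagonal block is zero, so those pairs match trivially.
\end{itemize}
A block matrix satisfies $\mathbf{M}^\top=\mathbf{M}$ exactly when $\mathbf{M}_{ji}=\mathbf{M}_{ij}^\top$ for all $i,j$, so $(\mathbf{L}^{\star})^\top=\mathbf{L}^{\star}$. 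This also makes $\mathbf{L}^{\star}$ self-adjoint for the standard inner product on $C^\bullet$.

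\textbf{Reduction to Hodge Laplacians.} Setting $\beta_k=\beta_{k+1}=1$ gives $\mathbf{D}_k=\mathbf{B}_k^\top\mathbf{B}_k+\mathbf{B}_{k+1}\mathbf{B}_{k+1}^\top$, which is exactly $\mathbf{\Delta}_k$ in \cref{def:hodge_laplacian}. The convention $\mathbf{B}_0=\mathbf{B}_{K+1}=\mathbf{0}$ covers the endpoints. At $k=0$ it gives $\mathbf{D}_0=\mathbf{B}_1\mathbf{B}_1^\top=\mathbf{L}^{\mathbf{G}}$, the graph Laplacian of the 1-skeleton. At $k=K$ it gives $\mathbf{D}_K=\mathbf{B}_K^\top\mathbf{B}_K$.

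The only point that needs care is bookkeeping. The coupling weight on the link between ranks $k$ and $k+1$ is indexed $\gamma_k$ and appears with $\mathbf{B}_{k+1}$. I would check that the same index $\gamma_k$ sits in both the $(k,k+1)$ and $(k+1,k)$ positions, since symmetry requires this. I would also state explicitly that the symmetry argument uses no sign condition on the weights. The hypotheses $\beta_k,\gamma_k\ge0$ are not needed for this proposition and only become relevant later, for positive semidefiniteness.
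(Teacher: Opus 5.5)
Your proposal is correct. It matches the paper, which gives no separate proof: it treats this proposition as a definition and reads symmetry and the reduction $\mathbf{D}_k=\mathbf{\Delta}_k$ at $\beta_k=1$ directly off the block form, exactly as you do. Your endpoint conventions $\mathbf{B}_0=\mathbf{B}_{K+1}=\mathbf{0}$, which make $\beta_0,\beta_{K+1}$ irrelevant, also agree with how the paper's proof of the PSD condition later uses $\mathbf{D}_0=\beta_1\mathbf{B}_1\mathbf{B}_1^\top$ and $\mathbf{D}_K=\beta_K\mathbf{B}_K^\top\mathbf{B}_K$.
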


\begin{theorem}[PSD Condition]
\label{thm:psd}
The Graded Laplacian $\mathbf{L}^\star \succeq 0$ when 
\begin{equation}
\gamma_k \leq \beta_{k+1} \cdot \sigma_{\min}^+(\mathbf{B}_{k+1}) \quad \text{for all } k = 0, \ldots, K-1,
\end{equation}
where $\sigma_{\min}^+(\cdot)$ denotes the smallest positive singular value.
\end{theorem}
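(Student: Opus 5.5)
The plan is to expand the quadratic form $\mathbf{x}^\top \mathbf{L}^\star \mathbf{x}$ for a graded cochain $\mathbf{x}=(\mathbf{x}_0,\ldots,\mathbf{x}_K)$ and regroup it as a sum, over $k=0,\ldots,K-1$, of two-variable quadratic forms. Each of these should be nonnegative under the stated condition.

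Reading off \eqref{eq:graded_laplacian} and using $\mathbf{B}_0=\mathbf{B}_{K+1}=\mathbf{0}$, the form is
\begin{equation*}
\mathbf{x}^\top\mathbf{L}^\star\mathbf{x}=\sum_{k=0}^{K}\Big(\beta_k\|\mathbf{B}_k\mathbf{x}_k\|^2+\beta_{k+1}\|\mathbf{B}_{k+1}^\top\mathbf{x}_k\|^2\Big)-2\sum_{k=0}^{K-1}\gamma_k\,\langle \mathbf{B}_{k+1}^\top\mathbf{x}_k,\mathbf{x}_{k+1}\rangle .
\end{equation*}
Each diagonal contribution is attached to exactly one boundary matrix. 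The term $\beta_{k+1}\|\mathbf{B}_{k+1}^\top\mathbf{x}_k\|^2$ comes from $\mathbf{D}_k$, and $\beta_{k+1}\|\mathbf{B}_{k+1}\mathbf{x}_{k+1}\|^2$ comes from $\mathbf{D}_{k+1}$. Both involve only $\mathbf{B}_{k+1}$.

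With $a_k:=\|\mathbf{B}_{k+1}^\top\mathbf{x}_k\|$ and $b_{k+1}:=\|\mathbf{B}_{k+1}\mathbf{x}_{k+1}\|$, the whole form is therefore $\sum_{k=0}^{K-1}\big[\beta_{k+1}(a_k^2+b_{k+1}^2)-2\gamma_k\langle\mathbf{B}_{k+1}^\top\mathbf{x}_k,\mathbf{x}_{k+1}\rangle\big]$. No term is used twice, so positivity reduces to each bracket separately.

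The key step is to bound the cross term by $a_k b_{k+1}$ rather than by $a_k\|\mathbf{x}_{k+1}\|$. Let $\mathbf{P}$ be the orthogonal projector onto $\operatorname{im}\mathbf{B}_{k+1}^\top=(\ker\mathbf{B}_{k+1})^\perp$. Since $\mathbf{B}_{k+1}^\top\mathbf{x}_k$ lies in this subspace, we get
\begin{equation*}
\langle\mathbf{B}_{k+1}^\top\mathbf{x}_k,\mathbf{x}_{k+1}\rangle=\langle\mathbf{B}_{k+1}^\top\mathbf{x}_k,\mathbf{P}\mathbf{x}_{k+1}\rangle .
\end{equation*}
On $(\ker\mathbf{B}_{k+1})^\perp$ the map $\mathbf{B}_{k+1}$ is bounded below by its smallest positive singular value, so $\|\mathbf{P}\mathbf{x}_{k+1}\|\le b_{k+1}/\sigma_{\min}^+(\mathbf{B}_{k+1})$. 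This follows from the SVD of $\mathbf{B}_{k+1}$. Cauchy--Schwarz then gives
\begin{equation*}
|\langle\mathbf{B}_{k+1}^\top\mathbf{x}_k,\mathbf{x}_{k+1}\rangle|\le \frac{a_kb_{k+1}}{\sigma_{\min}^+(\mathbf{B}_{k+1})}.
\end{equation*}
The degenerate case $\mathbf{B}_{k+1}=\mathbf{0}$ is trivial, because the cross term then vanishes.

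Each bracket is now bounded below by $\beta_{k+1}(a_k^2+b_{k+1}^2)-2c_k a_kb_{k+1}$ with $c_k=\gamma_k/\sigma_{\min}^+(\mathbf{B}_{k+1})$. By the hypothesis $c_k\le\beta_{k+1}$ and AM--GM ($2a_kb_{k+1}\le a_k^2+b_{k+1}^2$), this is at least $(\beta_{k+1}-c_k)(a_k^2+b_{k+1}^2)\ge 0$. Summing over $k$ yields $\mathbf{L}^\star\succeq0$.

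The main obstacle is the projection step. The naive bound $\|\mathbf{x}_{k+1}\|$ cannot be absorbed into the diagonal, because $\mathbf{D}_{k+1}$ need not control all of $\mathbf{x}_{k+1}$. Recognising that only the component outside $\ker\mathbf{B}_{k+1}$ matters, and that it is controlled by $\sigma_{\min}^+$, is what makes the stated threshold appear. The rest is bookkeeping to make sure each $\beta_{k+1}$-weighted term is paired with exactly one cross term.
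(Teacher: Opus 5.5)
Your proof is correct and follows the paper's route: the same exact regrouping of $\mathbf{x}^\top\mathbf{L}^\star\mathbf{x}$ into pairwise forms $Q_k(\mathbf{x}_k,\mathbf{x}_{k+1})$, each involving only $\mathbf{B}_{k+1}$, followed by a separate positivity check for each pair. The only difference is presentational: the paper uses the SVD of $\mathbf{B}_{k+1}$ to split each $Q_k$ into $2\times 2$ blocks, one per positive singular value $\sigma_i$, each PSD iff $\gamma_k\le\beta_{k+1}\sigma_i$ (which also shows the condition is exactly what makes each individual $Q_k$ PSD), whereas you bound the cross term coordinate-free by $a_kb_{k+1}/\sigma_{\min}^+(\mathbf{B}_{k+1})$ and close with AM--GM, reaching the same threshold.
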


\begin{proof}[Proof sketch.]
We decompose the quadratic form $\mathbf{x}^\top \mathbf{L}^\star \mathbf{x} = \sum_{k=0}^{K-1} Q_k(\mathbf{x}_k, \mathbf{x}_{k+1})$, independent terms coupling adjacent ranks. 
Analyzing the condition $\mathbf{x}^TQ_k \mathbf{x} \geq 0$ through SVD yields $\gamma_k$ (see Appendix~\ref{app:psd_proof}).
\end{proof}

Unlike Hodge Laplacians, which operate separately at each rank, $\mathbf{L}^\star$ is a single operator on the entire complex.

\subsection{Rank Collapse by Schur Complement}

While $\mathbf{L}^\star$ represents signals across all ranks, many downstream tasks only require vertex values.
We therefore split a signal into the part we keep, $\mathbf{u}\in C^0$, and the higher-order part we eliminate, $\mathbf{z}\in\bigoplus_{k\geq1}C^k$, and write
\begin{equation}
\label{eq:L_block}
\mathbf{L}^{\star}=\begin{pmatrix}\mathbf{A}&\mathbf{X}\\ \mathbf{X}^{\top}&\mathbf{C}\end{pmatrix}.
\end{equation}
Here $\mathbf{A}$ is the vertex block, $\mathbf{C}$ contains the internal edge-, face-, and higher-cell dynamics, and $\mathbf{X}$ couples vertices to those cells.
For a fixed vertex signal $\mathbf{u}$, the collapse chooses the higher-order values $\mathbf{z}$ that make the full energy as small as possible.
If $\mathbf{C}\succ0$, this relaxed higher-order completion is $\mathbf{z}^{\star}(\mathbf{u})=-\mathbf{C}^{-1}\mathbf{X}^{\top}\mathbf{u}$, and substituting it back gives the vertex-only energy $\mathbf{u}^{\top}(\mathbf{A}-\mathbf{X}\mathbf{C}^{-1}\mathbf{X}^{\top})\mathbf{u}$ via the Schur complement~\cite{boyd2004convex}.
\begin{wrapfigure}[4]{r}{0.63\columnwidth}
\vspace{-6.5mm}
\centering
\includegraphics[width=\linewidth]{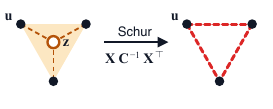}
\end{wrapfigure}
$\mathbf{X}\mathbf{C}^{-1}\mathbf{X}^{\top}$ records vertex-level energy that can be absorbed by the eliminated higher-order cells, leaving a vertex operator that incorporates their structure.
The singular case and regularized solve are addressed in \cref{sec:scalableComputation}.

\begin{definition}[Collapsed Effective Operator]
\label{def:collapseEffective}
Given the block matrix in \cref{eq:L_block} with $\mathbf{C}$ invertible, where $\mathbf{A} \in \mathbb{R}^{n_0 \times n_0}$
operates on vertices and $\mathbf{C}$ on higher-order cells, we define the \emph{collapsed effective operator} $\mathbf{S}$:
\begin{equation}
\mathbf{S} := \mathbf{A} - \mathbf{X}\mathbf{C}^{-1}\mathbf{X}^\top
\label{eq:schur_complement}
\end{equation}
\end{definition}

via the Schur complementation of $\mathbf{L}^{\star}$. 
This marginalizes higher-order structure to an operator at a single rank; from hereon, we assume $\mathbf{S}$ collapsed to the rank-0 (node) set due to practical relevance, although in theory any rank can be chosen by rearranging $\mathbf{L}^{\star}$ and defining $\mathbf{A}$ accordingly.

Let us consider a concrete construction of the collapsed operator from \cref{def:collapseEffective} for complexes with cells up to rank 2.

\begin{example}[Graded Laplacian for Rank-2 Complex]
\label{ex:graded_laplacian_rank2}
For a complex with maximum rank $K=2$ (vertices, edges, faces), $\mathbf{L}^\star$ takes the explicit form:
\begin{equation}
\mathbf{L}^\star = \begin{pmatrix}
\beta_1 \mathbf{B}_1 \mathbf{B}_1^\top & -\gamma_0 \mathbf{B}_1 & \mathbf{0} \\
-\gamma_0 \mathbf{B}_1^\top & \beta_1 \mathbf{B}_1^\top \mathbf{B}_1 + \beta_2 \mathbf{B}_2 \mathbf{B}_2^\top & -\gamma_1 \mathbf{B}_2 \\
\mathbf{0} & -\gamma_1 \mathbf{B}_2^\top & \beta_2 \mathbf{B}_2^\top \mathbf{B}_2
\end{pmatrix}
\label{eq:graded_laplacian_rank2}
\end{equation}
where $\beta_k$ are adjacency weights (diagonal blocks) and $\gamma_k$ are coupling weights (off-diagonal blocks). 
This can be partitioned as $\mathbf{L}^\star = \begin{psmallmatrix} \mathbf{A} & \mathbf{X} \\ \mathbf{X}^\top & \mathbf{C} \end{psmallmatrix}$ where:
\begin{itemize}[noitemsep,topsep=0.01em,leftmargin=*]
    \item $\mathbf{A} = \beta_1 \mathbf{B}_1 \mathbf{B}_1^\top$ operates on vertices
    \item $\mathbf{C}$ governs higher-order cell dynamics
    \item $\mathbf{X} = \begin{pmatrix} -\gamma_0 \mathbf{B}_1 & \mathbf{0} \end{pmatrix}$ couples $\mathbf{A}$ to higher-order cells.
\end{itemize}
\end{example}

\paragraph{Spectral Properties of the Collapsed Operator} The Schur complement inherits fundamental spectral properties from the block Laplacian. 
We first establish the matrix inequalities, then interpret their physical consequences.

\begin{proposition}[Spectral Bounds]
\label{prop:spectral_bounds}
Let $\mathbf{L}^\star \succeq 0$ with $\mathbf{C} \succ 0$. Then the collapsed effective operator satisfies:
\begin{equation}
\mathbf{0} \preceq \mathbf{S} \preceq \mathbf{A}.
\label{eq:loewner_bounds}
\end{equation}
\end{proposition}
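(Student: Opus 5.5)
The two inequalities in \cref{eq:loewner_bounds} have separate sources. The lower bound comes from the variational (energy-minimization) characterization of the Schur complement. The upper bound comes from $\mathbf{C}^{-1}\succ 0$. Throughout, $\mathbf{L}^\star$ is symmetric, so $\mathbf{A}$ and $\mathbf{C}$ are symmetric and the off-diagonal block of \cref{eq:L_block} is $\mathbf{X}^\top$.

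For $\mathbf{0}\preceq\mathbf{S}$, I will use the block congruence
\begin{equation*}
\begin{pmatrix}\mathbf{I} & -\mathbf{X}\mathbf{C}^{-1}\\ \mathbf{0} & \mathbf{I}\end{pmatrix}
\mathbf{L}^\star
\begin{pmatrix}\mathbf{I} & \mathbf{0}\\ -\mathbf{C}^{-1}\mathbf{X}^\top & \mathbf{I}\end{pmatrix}
=\begin{pmatrix}\mathbf{S} & \mathbf{0}\\ \mathbf{0} & \mathbf{C}\end{pmatrix},
\end{equation*}
which requires only that $\mathbf{C}$ be invertible. A congruence by an invertible matrix preserves positive semi-definiteness, so $\mathbf{L}^\star\succeq 0$ (guaranteed, for instance, by \cref{thm:psd}) gives $\operatorname{diag}(\mathbf{S},\mathbf{C})\succeq 0$. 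Restricting to the first block gives $\mathbf{S}\succeq 0$.

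An equivalent and more interpretable route is the energy computation sketched before \cref{def:collapseEffective}. Fix $\mathbf{u}\in C^0$ and set $\mathbf{z}^\star=-\mathbf{C}^{-1}\mathbf{X}^\top\mathbf{u}$. Expanding the quadratic form gives
\begin{equation*}
(\mathbf{u},\mathbf{z}^\star)^\top\mathbf{L}^\star(\mathbf{u},\mathbf{z}^\star)=\mathbf{u}^\top\mathbf{A}\mathbf{u}-2\mathbf{u}^\top\mathbf{X}\mathbf{C}^{-1}\mathbf{X}^\top\mathbf{u}+\mathbf{u}^\top\mathbf{X}\mathbf{C}^{-1}\mathbf{X}^\top\mathbf{u}=\mathbf{u}^\top\mathbf{S}\mathbf{u}.
\end{equation*}
The left-hand side is nonnegative because $\mathbf{L}^\star\succeq 0$, so $\mathbf{u}^\top\mathbf{S}\mathbf{u}\ge 0$. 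I would present this version, since it also shows that $\mathbf{S}$ is the minimal energy over all higher-order completions $\mathbf{z}$. That minimality follows from completing the square in $\mathbf{z}$, using $\mathbf{C}\succ 0$.

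For $\mathbf{S}\preceq\mathbf{A}$, note that $\mathbf{A}-\mathbf{S}=\mathbf{X}\mathbf{C}^{-1}\mathbf{X}^\top$. Since $\mathbf{C}\succ 0$, its inverse satisfies $\mathbf{C}^{-1}\succ 0$. Then for every $\mathbf{u}$, writing $\mathbf{w}=\mathbf{X}^\top\mathbf{u}$ gives $\mathbf{u}^\top\mathbf{X}\mathbf{C}^{-1}\mathbf{X}^\top\mathbf{u}=\mathbf{w}^\top\mathbf{C}^{-1}\mathbf{w}\ge 0$. The energy reading is that $\mathbf{u}^\top\mathbf{A}\mathbf{u}$ is the energy at the particular completion $\mathbf{z}=\mathbf{0}$, which can only exceed the minimum over $\mathbf{z}$.

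No step is genuinely hard. The only care needed is in the hypotheses. The lower bound must use $\mathbf{L}^\star\succeq 0$ for the whole matrix, not merely $\mathbf{A}\succeq 0$. The upper bound uses positive definiteness of $\mathbf{C}$, both so that $\mathbf{C}^{-1}$ exists and so that it is PSD. Both hypotheses are given in the statement.
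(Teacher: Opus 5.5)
Your proof is correct and follows essentially the same route as the paper. The lower bound $\mathbf{0}\preceq\mathbf{S}$ comes from $\mathbf{L}^\star\succeq 0$ through the Schur complement, equivalently by substituting the energy-minimizing completion $\mathbf{z}^\star=-\mathbf{C}^{-1}\mathbf{X}^\top\mathbf{u}$ into the quadratic form. The upper bound follows from $\mathbf{A}-\mathbf{S}=\mathbf{X}\mathbf{C}^{-1}\mathbf{X}^\top\succeq 0$, since $\mathbf{C}^{-1}\succ 0$.
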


 where $\mathbf{P} \preceq \mathbf{Q}$ means $\mathbf{Q} - \mathbf{P} \succeq 0$ is PSD \cite{boyd2004convex}.
 This ordering has immediate consequences for eigenvalues. 
 Let $0 = \lambda_1 \leq \lambda_2 \leq \cdots \leq \lambda_{n_0}$ denote the ordered eigenvalues of either operator.

\begin{corollary}[Eigenvalue Compression]
\label{cor:eigenvalue_interlacing}
For each $k = 1, \ldots, n_0$:
\begin{equation}
\lambda_k(\mathbf{S}) \leq \lambda_k(\mathbf{A}).
\label{eq:eigenvalue_bound}
\end{equation}
In particular, any eigenvector $\mathbf{v}$ of $\mathbf{A}$ satisfying $\mathbf{X}^\top \mathbf{v}=\mathbf{0}$ is unaffected by the Schur correction.
\end{corollary}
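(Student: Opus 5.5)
The eigenvalue bound follows directly from \Cref{prop:spectral_bounds} through the Courant--Fischer min-max characterization. Both $\mathbf{S}$ and $\mathbf{A}$ are symmetric $n_0\times n_0$ matrices, since $\mathbf{A}$ is a diagonal block of the symmetric $\mathbf{L}^\star$ and $\mathbf{X}\mathbf{C}^{-1}\mathbf{X}^\top$ is symmetric because $\mathbf{C}$ is. Their eigenvalues are therefore real and can be ordered as in the statement. For the $k$-th smallest eigenvalue we use
\begin{equation*}
\lambda_k(\mathbf{M}) = \min_{\dim \mathcal{U}=k}\ \max_{\mathbf{u}\in\mathcal{U},\,\|\mathbf{u}\|=1} \mathbf{u}^\top \mathbf{M}\mathbf{u}.
\end{equation*}
By \Cref{prop:spectral_bounds}, $\mathbf{S}\preceq\mathbf{A}$, so $\mathbf{u}^\top\mathbf{S}\mathbf{u}\le \mathbf{u}^\top\mathbf{A}\mathbf{u}$ for every $\mathbf{u}$. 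Fix a $k$-dimensional subspace $\mathcal{U}$. The inner maximum for $\mathbf{S}$ over $\mathcal{U}$ is at most the inner maximum for $\mathbf{A}$ over $\mathcal{U}$. Taking the minimum over all $\mathcal{U}$ preserves this inequality, which gives $\lambda_k(\mathbf{S})\le\lambda_k(\mathbf{A})$ for each $k$. This is Weyl's monotonicity principle, and it needs nothing beyond the Loewner ordering already established.

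For the second claim, suppose $\mathbf{A}\mathbf{v}=\lambda\mathbf{v}$ and $\mathbf{X}^\top\mathbf{v}=\mathbf{0}$. Then
\begin{equation*}
\mathbf{S}\mathbf{v} = \mathbf{A}\mathbf{v} - \mathbf{X}\mathbf{C}^{-1}(\mathbf{X}^\top\mathbf{v}) = \lambda\mathbf{v}.
\end{equation*}
So $\mathbf{v}$ is an eigenvector of $\mathbf{S}$ with the same eigenvalue, and the correction term annihilates it. The same computation shows the quadratic form is unchanged: $\mathbf{v}^\top\mathbf{S}\mathbf{v}=\mathbf{v}^\top\mathbf{A}\mathbf{v}$. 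In the concrete setting of \Cref{ex:graded_laplacian_rank2}, $\mathbf{X}^\top\mathbf{v}=\mathbf{0}$ means $\gamma_0\mathbf{B}_1^\top\mathbf{v}=\mathbf{0}$. Every such $\mathbf{v}$ lies in the kernel of $\mathbf{A}=\beta_1\mathbf{B}_1\mathbf{B}_1^\top$, so it has eigenvalue $0$. This includes in particular the locally constant vectors, which gives $\lambda_1=0$ for both operators, consistent with the convention stated before the corollary.

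No step is a real obstacle. The only care needed is to read ``unaffected'' as ``remains an eigenvector of $\mathbf{S}$ with the same eigenvalue''. It should not be read as ``occupies the same index $k$ in the ordered spectrum'', because the indices of other eigenvalues can shift downward. It is also worth noting that the ordering $0=\lambda_1$ uses $\mathbf{S}\succeq0$ from \Cref{prop:spectral_bounds}.
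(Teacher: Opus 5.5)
Your argument is correct and follows the route the paper intends: the paper gives no separate proof and treats the corollary as an immediate consequence of the Loewner ordering $\mathbf{S}\preceq\mathbf{A}$ from \Cref{prop:spectral_bounds} via Courant--Fischer (Weyl monotonicity), while the second claim follows from $\mathbf{S}\mathbf{v}=\mathbf{A}\mathbf{v}-\mathbf{X}\mathbf{C}^{-1}\mathbf{X}^\top\mathbf{v}=\mathbf{A}\mathbf{v}$. One small nit concerns your side remark: ``every such $\mathbf{v}$ lies in $\ker\mathbf{A}$'' requires $\gamma_0>0$, since for $\gamma_0=0$ one has $\mathbf{X}=\mathbf{0}$ and the claim holds trivially because $\mathbf{S}=\mathbf{A}$.
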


The gap is concentrated at the low end of the spectrum and closes at higher modes, where the eigenvectors $\mathbf{v}_k$ approach the null space of $\mathbf{X}^\top$.

The eigenvalue compression in~\cref{eq:eigenvalue_bound} admits a physical interpretation. 
Recall that for a graph Laplacian $\mathbf{L}$, the effective conductance between nodes $i$ and $j$ can be characterized variationally \cite{doyle1984random}:
\begin{equation}
C_{ij}^{\mathbf{L}} = \min_{\mathbf{x}: x_i - x_j = 1} \mathbf{x}^\top \mathbf{L} \mathbf{x}.
\end{equation}

\begin{corollary}[Reduction of Effective Conductance]
\label{cor:effective_resistance}
Higher-order cells reduce the effective conductance between vertices:
$C_{ij}^{\mathbf{S}} \leq C_{ij}^{\mathbf{A}}$ for all $i, j \in V$.
\end{corollary}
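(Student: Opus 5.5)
The plan is to derive \cref{cor:effective_resistance} directly from the Loewner upper bound in \cref{prop:spectral_bounds}, namely $\mathbf{S} \preceq \mathbf{A}$. The key observation is that effective conductance is a minimum of a quadratic form over a constraint set that does not depend on the operator. So any pointwise ordering of the quadratic forms passes through the minimum.

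Fix vertices $i \neq j$ and let $\mathcal{F}_{ij} = \{\mathbf{x} \in \mathbb{R}^{n_0} : x_i - x_j = 1\}$; the case $i=j$ is vacuous, since the constraint set is then empty.

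First, I would check that both minima are attained. The feasible set is a nonempty affine subspace, and both $\mathbf{S}$ and $\mathbf{A}$ are PSD: $\mathbf{S}$ by \cref{prop:spectral_bounds}, and $\mathbf{A}$ as a PSD diagonal block of $\mathbf{L}^\star \succeq 0$. Hence each quadratic form is bounded below by $0$ on $\mathcal{F}_{ij}$. A convex quadratic bounded below on an affine set attains its infimum. Even without this, the argument goes through with infima in place of minima, so the step is not essential.

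Second, let $\mathbf{x}^\ast \in \mathcal{F}_{ij}$ be a minimizer for $\mathbf{A}$, so $C_{ij}^{\mathbf{A}} = \mathbf{x}^{\ast\top}\mathbf{A}\mathbf{x}^\ast$. Since $\mathbf{A} - \mathbf{S} = \mathbf{X}\mathbf{C}^{-1}\mathbf{X}^\top \succeq 0$, we have $\mathbf{x}^{\ast\top}\mathbf{S}\mathbf{x}^\ast \le \mathbf{x}^{\ast\top}\mathbf{A}\mathbf{x}^\ast$. Because $\mathbf{x}^\ast$ is feasible for the $\mathbf{S}$-problem,
\begin{equation*}
C_{ij}^{\mathbf{S}} \le \mathbf{x}^{\ast\top}\mathbf{S}\mathbf{x}^\ast \le \mathbf{x}^{\ast\top}\mathbf{A}\mathbf{x}^\ast = C_{ij}^{\mathbf{A}}.
\end{equation*}
This is the whole inequality. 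It needs only $\mathbf{C} \succ 0$, which gives $\mathbf{X}\mathbf{C}^{-1}\mathbf{X}^\top \succeq 0$; that is exactly the upper half of \cref{eq:loewner_bounds}.

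The main obstacle is interpretive rather than technical. The variational formula for $C_{ij}$ is stated for graph Laplacians, and $\mathbf{S}$ is generally dense and need not be a Laplacian with nonnegative edge weights. I would therefore note that the corollary uses the same variational expression as the definition of $C_{ij}^{\mathbf{S}}$ for any PSD operator.

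One further remark would make the sign structure explicit. Here $\mathbf{S}\mathbf{1} = \mathbf{A}\mathbf{1} - \mathbf{X}\mathbf{C}^{-1}\mathbf{X}^\top\mathbf{1}$, and $\mathbf{X}^\top \mathbf{1} = -\gamma_0(\mathbf{B}_1^\top \mathbf{1}, \mathbf{0}) = \mathbf{0}$ because every column of $\mathbf{B}_1$ sums to zero. So $\mathbf{S}$ annihilates constants, like a Laplacian, and the quantity $C_{ij}^{\mathbf{S}}$ is shift-invariant and well-posed.
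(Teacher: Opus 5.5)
Your proof is correct and takes the same approach as the paper. The paper's proof just notes that $\mathbf{S}\preceq\mathbf{A}$ gives $\mathbf{x}^\top\mathbf{S}\mathbf{x}\le\mathbf{x}^\top\mathbf{A}\mathbf{x}$ on the operator-independent feasible set $\{x_i-x_j=1\}$ and then takes minima, which you make explicit by evaluating the $\mathbf{S}$-form at a minimizer of the $\mathbf{A}$-problem. Your closing remark that $\mathbf{S}\mathbf{1}=\mathbf{0}$ holds for signed cellular boundary matrices but not for the unsigned incidence (combinatorial complex) realization; since the inequality does not use it, this does not affect the argument.
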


\begin{proof}
Since $\mathbf{S} \preceq \mathbf{A}$, any feasible $\mathbf{x}$ satisfies $\mathbf{x}^\top \mathbf{S} \mathbf{x} \leq \mathbf{x}^\top \mathbf{A} \mathbf{x}$. Taking minima preserves the inequality.
\end{proof}
Physically, the inclusion of higher-order cells reduces the effective flow of information through the system. 
Vertices that share membership in high-rank cells experience weakened direct coupling due to the subtractive mediated term $\mathbf{X}\mathbf{C}^{-1}\mathbf{X}^\top$.
This reflects a dynamic where higher-order routes interfere with or direct signal flow away from standard pairwise edges. 
When analyzing the properties of spectral operators, Cheeger inequalities also play an important role.

\begin{remark}[Cheeger Inequalities and Spectral Compression]
\label{rem:cheeger}
The classical Cheeger inequality for graphs relates the spectral gap $\lambda_2(\mathbf{L}^G)$ to the Cheeger constant $h(G)$, measuring edge expansion \cite{alon1985lambda1}:
\begin{equation}
\frac{h(G)^2}{2} \leq \lambda_2(\mathbf{L}^{G}) \leq 2h(G).
\label{eq:cheeger_classical}
\end{equation}

A small spectral gap implies weak connectivity and the presence of sparse cuts, while a strictly positive lower bound on $\lambda_2$ guarantees well-connected, fast-mixing dynamics. 
In the context of the collapsed effective operator $\mathcal{S}$, the spectral compression established in Proposition~\ref{prop:spectral_bounds} yields:

\begin{equation}
\lambda_2(\mathbf{S}) \leq \lambda_2(\mathbf{A}) \leq 2h(G),
\label{eq:cheeger_upper_inherited}
\end{equation}
so the upper Cheeger bound transfers automatically to the effective operator $\mathbf{S}$. However, the 
lower bound does not transfer: since $\lambda_2(\mathbf{S}) \leq \lambda_2(\mathbf{A})$, the spectral gap may shrink under Schur reduction, potentially violating $\lambda_2(\mathbf{S}) \geq h(G)^2/2$.
This mirrors a fundamental obstacle in extending Cheeger inequalities to higher-order structures \citet{gundert2014higher,parzanchevski2016isoperimetric} (see \cref{app:cheeger}).
\end{remark}

\subsection{Realization on Combinatorial Complexes}
\label{subsec:cc_realization}

Thus far, our realisation of collapsed operators has been developed for cell complexes, not for more general structures.
We briefly elucidate how the same computations apply when the boundary matrices $\mathbf{B}_k$ are replaced by general incidence matrices between ranks.

CCs are a useful example because they allow higher-order cells without requiring all lower-rank subsets to be present\cite{hajij2022topological,hajij2023combinatorial}.
For these more general structures, the homological interpretation changes.

\begin{definition}[Combinatorial Complex and Incidence]
\label{def:combinatorial_complex}
CC is a triple $\mathcal{C} = (S,\mathcal{X},\text{rk})$, where $S$ is a finite ground set, $\mathcal{X}\subseteq2^S$ is a collection of non-empty subsets called cells, and $\text{rk}:\mathcal{X}\to\mathbb{Z}_{\geq0}$ is order-preserving: $x\subseteq y$ implies $\text{rk}(x)\leq\text{rk}(y)$.
For $\mathcal{X}_k=\{\sigma\in\mathcal{X}:\text{rk}(\sigma)=k\}$, we write $\mathbf{I}_{k,i}$ for the binary incidence matrix between ranks $k<i$:
\begin{equation}
[\mathbf{I}_{k,i}]_{pq}=1 \quad \text{iff} \quad \sigma_p\subseteq\sigma_q,
\end{equation}
where containment may be direct or transitive through intermediate cells.
\end{definition}

If a CC contains incidences that skip ranks, the Graded Laplacian \cref{eq:graded_laplacian} fills with more off-diagonal entries.
To maintain the tri-diagonal structure, one can pass to an adjacent-rank graded poset by inserting intermediate elements along those skipped relations.
While additional scaling parameters can be inserted to maintain positive-definiteness, such ``rank filling'' is a useful bookkeeping step for constructing adjacent-rank incidence matrices (see \cref{subsec:rank_completion}).
With this adjacent-rank realization, we instantiate \cref{eq:graded_laplacian} by replacing $\mathbf{B}_k$ with $\mathbf{I}_{k-1,k}$.
The Schur complement and collapse are unchanged, while diagonal blocks $\mathbf{D}_k$ become incidence-induced adjacency operators.

\begin{algorithm}[t]
\caption{Application of the Regularized Collapsed Higher-Order Operator}
\label{alg:schur_application}
\begin{algorithmic}[1]
\REQUIRE Vertex signal $x$, boundary or incidence matrices $\{\mathbf{B}_k\}$, adjacency weights $\{\beta_k\}$, coupling weights $\{\gamma_k\}$, regularization $\epsilon$
\ENSURE $\mathbf{S}_\epsilon \mathbf{x} = (\mathbf{A} - \mathbf{X}(\mathbf{C} + \epsilon \mathbf{I})^{-1}\mathbf{X}^\top)\mathbf{x}$
\STATE \textbf{Construct Blocks:}
\STATE $\mathbf{A} \leftarrow \beta_1 \mathbf{B}_1 \mathbf{B}_1^\top$, \quad $\mathbf{X} \leftarrow [-\gamma_0 \mathbf{B}_1, \mathbf{0}]$
\STATE $\mathbf{C} \leftarrow$ Block-tridiagonal higher-order Laplacian
\STATE \textbf{Implicit Solve:}
\STATE $\mathbf{y} \leftarrow \mathbf{X}^\top \mathbf{x}$
\STATE $\mathbf{z} \leftarrow (\mathbf{C} + \epsilon \mathbf{I})^{-1}\mathbf{y}$ \qquad \COMMENT{Use sparse solver (e.g., CG)}
\STATE \textbf{return} $\mathbf{S}_\epsilon \mathbf{x} \leftarrow \mathbf{Ax} - \mathbf{Xz}$
\end{algorithmic}
\end{algorithm}

Next, we analyze practical considerations concerning the computability of the \textit{collapsed effective operator} $\mathbf{S}$.

\vspace{-2mm}
\subsection{Scalable Computation and Regularization}
\label{sec:scalableComputation}
\vspace{-1mm}
A direct computation of $\mathbf{S}$ in \cref{eq:schur_complement} can be costly for large complexes.
We address two practical considerations: (i) the matrix $C$ may be singular, and (ii) computing the inverse $\mathbf{C}^{-1}$ does not preserve sparsity.
Of particular concern is the kernel of $\mathbf{C}$; scholars of topology may observe that ker($\mathbf{\Delta}_k$) contains the \textit{homology cycles} of the rank with proper boundary operators \cite{hatcher2002algebraic}. 
These cycles represent topological ``holes'' whose count at each dimension, the Betti numbers, determines the kernel dimension of the Laplacian.
We address this via Tikhonov regularization: 
\begin{equation}
\mathbf{S}_\epsilon := \mathbf{A} - \mathbf{X}(\mathbf{C} + \epsilon \mathbf{I})^{-1}\mathbf{X}^\top.
\end{equation}

Since $\mathbf{C} \succeq 0$, this only requires a small numerical correction; in practice we choose $\epsilon \in [10^{-6}, 10^{-4}]$, stabilizing the near-null part of the higher-order solve while leaving the positive spectrum close to the unregularized collapse.
The following proposition makes the approximation precise.

\begin{proposition}[Regularized Collapse Error]
\label{prop:regularization_error}
Consider the rank-0 collapse of the Graded Laplacian in \cref{eq:graded_laplacian}, with block decomposition $\mathbf{L}^{\star}=\begin{psmallmatrix}\mathbf{A}&\mathbf{X}\\ \mathbf{X}^{\top}&\mathbf{C}\end{psmallmatrix}$ and weights satisfying \cref{thm:psd}.
Let
\begin{equation}
\mathbf{S}^{\dagger}:=\mathbf{A}-\mathbf{X}\mathbf{C}^{\dagger}\mathbf{X}^{\top},
\end{equation}
denote the zero-regularization limit, where $\mathbf{C}^{\dagger}$ is the Moore--Penrose pseudoinverse.
Then the kernel of $\mathbf{C}$ does not contribute to the vertex-level operator.
If $\mathbf{C}$ has positive eigenvalues, then for $\epsilon>0$,
\begin{equation}
\|\mathbf{S}_{\epsilon}-\mathbf{S}^{\dagger}\|_2
\leq
\frac{\epsilon\|\mathbf{X}\|_2^2}{\lambda_{+}(\lambda_{+}+\epsilon)}.
\label{eq:regularization_error}
\end{equation}
where $\lambda_{+}:=\lambda_{\min}^+(\mathbf{C})$ is the smallest positive eigenvalue of $\mathbf{C}$.
If $\mathbf{C}$ has no positive eigenvalues, then instead $\mathbf{X}=\mathbf{0}$ and $\mathbf{S}_{\epsilon}=\mathbf{S}^{\dagger}=\mathbf{A}$.
In particular, $\mathbf{S}_{\epsilon}\to \mathbf{S}^{\dagger}$ as $\epsilon\to0$; if $\mathbf{C}$ is nonsingular, this limit is the exact Schur complement $\mathbf{S}$.
\end{proposition}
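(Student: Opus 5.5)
The plan is to work in an eigenbasis of $\mathbf{C}$ and reduce everything to scalar comparisons of $1/(\lambda+\epsilon)$ against $1/\lambda$ on the positive eigenvalues of $\mathbf{C}$. The first and key structural step is to show that $\operatorname{range}(\mathbf{X}^\top)\perp\ker\mathbf{C}$, i.e.\ $\mathbf{X}\,\mathbf{P}_0=\mathbf{0}$, where $\mathbf{P}_0$ is the orthogonal projector onto $\ker\mathbf{C}$. This uses the hypothesis of \cref{thm:psd}, which gives $\mathbf{L}^\star\succeq0$.

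For the structural step, take $\mathbf{w}\in\ker\mathbf{C}$ and evaluate the quadratic form of $\mathbf{L}^\star$ on $(t\mathbf{u},\mathbf{w})$:
\[
t^2\mathbf{u}^\top\mathbf{A}\mathbf{u}+2t\,\mathbf{u}^\top\mathbf{X}\mathbf{w}+\mathbf{w}^\top\mathbf{C}\mathbf{w}=t^2\mathbf{u}^\top\mathbf{A}\mathbf{u}+2t\,\mathbf{u}^\top\mathbf{X}\mathbf{w}.
\]
This is $\ge0$ for all $t\in\mathbb{R}$ only if $\mathbf{u}^\top\mathbf{X}\mathbf{w}=0$. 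To see it, replace $\mathbf{w}$ by $s\mathbf{w}$ and let $s\to\infty$ with $t$ of opposite sign. Since $\mathbf{u}$ is arbitrary, $\mathbf{X}\mathbf{w}=\mathbf{0}$.

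This is the generalized Schur-complement range condition, and it yields the claim that ``the kernel of $\mathbf{C}$ does not contribute.'' Indeed, $\mathbf{X}(\mathbf{C}+\epsilon\mathbf{I})^{-1}\mathbf{X}^\top$ and $\mathbf{X}\mathbf{C}^\dagger\mathbf{X}^\top$ both only see $\mathbf{X}(\mathbf{I}-\mathbf{P}_0)$, and the kernel directions, where the regularized inverse equals $1/\epsilon$, are annihilated. It also handles the degenerate case directly. If $\mathbf{C}$ has no positive eigenvalues, then, being PSD, it is $\mathbf{0}$, so $\ker\mathbf{C}$ is everything and $\mathbf{X}=\mathbf{0}$, giving $\mathbf{S}_\epsilon=\mathbf{S}^\dagger=\mathbf{A}$.

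For the bound, write $\mathbf{C}=\sum_i\lambda_i\mathbf{q}_i\mathbf{q}_i^\top$ with $\lambda_i>0$ on the range. Then
\[
\mathbf{S}^\dagger-\mathbf{S}_\epsilon=\mathbf{X}\,\mathbf{M}_\epsilon\,\mathbf{X}^\top,\qquad \mathbf{M}_\epsilon=\sum_{\lambda_i>0}\Bigl(\tfrac1{\lambda_i+\epsilon}-\tfrac1{\lambda_i}\Bigr)\mathbf{q}_i\mathbf{q}_i^\top,
\]
up to sign. The scalar coefficients are $\epsilon/(\lambda_i(\lambda_i+\epsilon))$ in absolute value, and this function is decreasing in $\lambda_i$, so $\|\mathbf{M}_\epsilon\|_2\le\epsilon/(\lambda_+(\lambda_++\epsilon))$. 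Submultiplicativity then gives
\[
\|\mathbf{X}\mathbf{M}_\epsilon\mathbf{X}^\top\|_2\le\|\mathbf{X}\|_2^2\,\|\mathbf{M}_\epsilon\|_2,
\]
which is \cref{eq:regularization_error}.

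The limit $\epsilon\to0$ follows immediately from the bound. When $\mathbf{C}$ is nonsingular, $\mathbf{C}^\dagger=\mathbf{C}^{-1}$, so $\mathbf{S}^\dagger=\mathbf{S}$ of \cref{def:collapseEffective}. The main obstacle is the range-inclusion step: it is what prevents the $1/\epsilon$ blow-up on $\ker\mathbf{C}$, and it relies genuinely on $\mathbf{L}^\star\succeq0$ rather than on $\mathbf{C}\succeq0$ alone.
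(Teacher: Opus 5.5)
Your proof is correct. The second half matches the paper's proof essentially line for line: expand $(\mathbf{C}+\epsilon\mathbf{I})^{-1}$ and $\mathbf{C}^{\dagger}$ in an eigenbasis of $\mathbf{C}$, drop the $\frac{1}{\epsilon}$ kernel term, and bound $\epsilon/(\lambda_i(\lambda_i+\epsilon))$ by its value at $\lambda_{+}$.

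Where you differ is the key lemma $\ker\mathbf{C}\subseteq\ker\mathbf{X}$.

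\textbf{The paper's route.} It proves the lemma from the explicit structure of the graded Laplacian. It uses $\mathbf{X}=[-\gamma_0\mathbf{B}_1,\mathbf{0},\dots]$, and for $\mathbf{v}\in\ker\mathbf{C}$ it writes $0=\mathbf{v}^{\top}\mathbf{C}\mathbf{v}=\beta_1\|\mathbf{B}_1\mathbf{v}_1\|^2+\sum_{k\geq1}Q_k(\mathbf{v}_k,\mathbf{v}_{k+1})$, where each $Q_k\geq0$ comes from the pairwise decomposition in the proof of \cref{thm:psd}. This gives $\mathbf{B}_1\mathbf{v}_1=\mathbf{0}$ when $\beta_1>0$. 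When $\beta_1=0$, it instead uses the PSD condition to force $\gamma_0=0$.

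\textbf{Your route.} You use the generic fact that any PSD symmetric block matrix has $\ker\mathbf{C}\subseteq\ker\mathbf{X}$. You prove it with the linear-versus-quadratic argument on $(t\mathbf{u},s\mathbf{w})$. This is the mirror image of the range condition $(\mathbf{I}-\mathbf{A}\mathbf{A}^{\dagger})\mathbf{X}=0$ in the appendix's Schur complement lemma.

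\textbf{What each buys.} Your argument is shorter, needs no case split on $\beta_1$, and is structure-agnostic. It applies verbatim to collapsing onto an intermediate rank, where $\mathbf{X}$ couples to two neighbouring ranks. It equally covers the incidence-based CC realization or per-cell weights, provided $\mathbf{L}^{\star}\succeq0$. The paper's argument gives a concrete description of the harmless kernel directions: their edge component satisfies $\mathbf{B}_1\mathbf{v}_1=\mathbf{0}$. That is what ties the claim ``the kernel does not contribute'' to cycles and homology. It also uses the weight conditions more economically. Since $\mathbf{C}$ does not involve $\gamma_0$, for $\beta_1>0$ the inclusion needs only the conditions with $k\geq1$ and holds for any $\gamma_0$. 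Your argument, by contrast, needs the full $\mathbf{L}^{\star}\succeq0$, including the bound on $\gamma_0$.
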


The proof is provided in \cref{app:approximation_errors}.
Thus, the regularization does not introduce an uncontrolled $1/\epsilon$ contribution from homological kernel directions; it only damps the positive higher-order modes, and the induced spectral perturbation is bounded by \cref{eq:regularization_error}. By Weyl's inequality, the same quantity bounds the change in each eigenvalue.
We use the regularized operator $\mathbf{S}_{\epsilon}$ in practice rather than forming $\mathbf{C}^{\dagger}$: the pseudoinverse is generally dense and sensitive to numerical rank decisions, whereas $\mathbf{C}+\epsilon\mathbf{I}$ is invertible and can be applied through sparse linear solves.
To avoid densifying the entire operator $\mathbf{S_\varepsilon}$, we compute its action $\mathbf{S_{\varepsilon}x}$ on a signal $\mathbf{x}$ iteratively.
This procedure is summarized in Algorithm~\ref{alg:schur_application}: note that because $\mathbf{C}$ retains the sparsity structure inherited from the boundary/incidence matrices, this step can be efficiently handled, e.g., via (preconditioned) Conjugate Gradient (CG).

%% file: sections/5_experiments_schur.tex
\section{Empirical Analysis}

We evaluate the proposed \textit{collapsed effective operator} by analyzing how higher-order structure, once collapsed to the vertex level, affects empirical quantities such as spectral properties, diffusion behavior, and ability to characterize local neighborhoods. 
Rather than treating the operator $\mathbf{S_{\varepsilon}}$ as a replacement for existing Laplacians, our experiments are designed to probe its quantitative effects in several ways. 
We begin with controlled synthetic complexes to study signal smoothness and energy behavior (\cref{subsec:experiments_smooth}), examine spectral clustering and segmentation tasks (\cref{subsec:spectral_clustering,subsec:normalising_cuts}), and finally assess downstream learning performance using spectral features as input (\cref{subsec:spectral_regression,sec:node_classification}).

We compare to the Multiorder Laplacian~\citep{lucas2020multi} as a natural alternative: it builds one vertex Laplacian per hyperedge order and sums these contributions, whereas our operator first couples ranks in a graded block operator and then collapses them by Schur complement.

\begin{definition}[Multiorder Laplacian~\citep{lucas2020multi}]
\label{def:multiorder_laplacian}
Let $\mathcal{H}$ be a hypergraph on $n_0$ vertices with maximum hyperedge order $d_{\max}$.
For each order $d\geq1$, define the \emph{$d$-th order Laplacian}
$\mathbf{L}^{(d)}\in\mathbb{R}^{n_0\times n_0}$ on vertices by
\begin{equation}
L_{ij}^{(d)}=d\,K_i^{(d)}\delta_{ij}-A_{ij}^{(d)},
\end{equation}
where $K_i^{(d)}$ is the generalized degree of vertex $i$ at order $d$ and $A_{ij}^{(d)}$ counts order-$d$ hyperedges containing both $i$ and $j$.
The \emph{Multiorder Laplacian} $\mathbf{L}^{(\mathrm{mult})}$ aggregates contributions across orders:
\begin{equation}
L_{ij}^{(\mathrm{mult})}=\sum_{d=1}^{d_{\max}}\frac{\gamma_d}{\langle K^{(d)}\rangle}\,L_{ij}^{(d)},
\end{equation}
where $\gamma_d>0$ weights each order and $\langle K^{(d)}\rangle$ denotes the average $d$-th order generalized degree.
\end{definition}

For normalizing cuts, we performed grid search over $\beta_1 \in \{0.1,1,10\}$ and $\varepsilon \in \{0.1,0.5,1\}$ for $\mathbf{S_{\varepsilon}}$, and $\gamma_0,\gamma_1 \in \{0.1,0.5,1.0,2.0,5.0\}$ for $\mathbf{L}^{(\text{mult})}$. 
The optimal parameters are reported in \cref{app:hpNormalisingCuts}. 
For all other experiments, we fix $\beta_0=\beta_1=\beta_2=1$ with $\gamma_0,\gamma_1$ set to the upper bound from \cref{thm:psd} to hold, and all $\gamma$'s to $1$ for $\mathbf{L}^{(\text{mult})}$.

\subsection{Signal Smoothness}
\label{subsec:experiments_smooth}

We first assess the collapsed effective operator $\mathbf{S_\varepsilon}$ as a
\textbf{topologically aware filter}, comparing it against the standard graph
Laplacian $\mathbf{L^{G}}$ and the Multiorder Laplacian
$\mathbf{L}^{(\text{mult})}$ (\cref{def:multiorder_laplacian},
\cref{def:collapseEffective}) on a manifold denoising task: signal
reconstruction on random geometric graphs ($N=200$) with Gaussian feature
noise (\cref{tab:topology_results}). Denoising a signal on a proximity graph
is the discrete analogue of denoising on a manifold, with image denoising,
where, e.g., pixels form nodes of a similarity graph
\citep{elmoataz2008nonlocal, pang2017graph}. 
We inject controlled topological noise by adding $0, 100, 150, 200$ random shortcut edges between arbitrary node pairs, progressively breaking the graph's geometric locality, and measure each operator's sensitivity to spurious non-local connectivity. 
While all operators are tied in the clean regime ($\mathbf{S_\varepsilon}$ and $\mathbf{L^{G}}$ both attain
$0.066$), their behavior diverges as locality degrades: $\mathbf{L^{G}}$
deteriorates by roughly $50\%$ ($0.066 \to 0.099$), whereas $\mathbf{S_{\varepsilon}}$
remains essentially flat ($0.066 \to 0.073$). 
This robustness follows directly from \cref{cor:effective_resistance}: shortcut edges that close into higher-order cells have their pairwise coupling damped by $\mathbf{X}(\mathbf{C}+\epsilon\mathbf{I})^{-1}\mathbf{X}^{\top}$, so $\mathbf{S_{\varepsilon}}$ selectively suppresses exactly the spurious couplings that $\mathbf{L^{G}}$ over-smooths across.

\begin{table}[t]
\vspace{-1.0mm}
\caption{Manifold denoising on random geometric graphs with increasing topological noise. We report reconstruction MSE under 0, 100, 150, and 200 randomly added shortcut edges. The collapsed operator \(S_\epsilon\) remains robust as graph locality is degraded.} %
\vspace{-1.0mm}
\label{tab:topology_results}
\begin{center}
\begin{small}
\begin{sc}
\setlength{\tabcolsep}{3pt}
\resizebox{\columnwidth}{!}{%
\begin{tabular}{lccccc}
\toprule
\textbf{Exp.} & \textbf{Metric} & \textbf{ $\textbf{L}^{\mathbf{G}}$} & \textbf{$\mathbf{L}^{(\text{mult})}$} & $\mathbf{S_{\varepsilon}}$ &  \\
\midrule
Manifold $(0)$ & MSE $\downarrow$ & \textbf{0.066} & 0.074 & \textbf{0.066}  \\
Manifold $(100)$ & MSE $\downarrow$ & 0.082 & 0.081 & \textbf{0.074}  \\
Manifold $(150)$ & MSE $\downarrow$ & 0.092 & 0.088 & \textbf{0.072}  \\
Manifold $(200)$ & MSE $\downarrow$ & 0.099 & 0.094 & \textbf{0.073}  \\
\bottomrule
\end{tabular}
}
\end{sc}
\end{small}
\end{center}
\vspace{-6.5mm}
\end{table}

\subsection{Spectral HKS Clustering on CC}
\label{subsec:spectral_clustering}
Spectral clustering partitions graphs using the eigenvectors of the Laplacian, making it a natural testbed for evaluating whether our collapsed effective operator captures topologically meaningful structure that improves cluster separability relative to standard graph-based methods.

\paragraph{Setup} 
We evaluate our method on the Topotein benchmark \cite{wang2025topotein} by comparing to the standard graph Laplacian $\mathbf{L^G}$. For each protein, we construct a combinatorial complex composed of Rank-0 cells (individual residues) and Rank-1 cells (backbone connections and hydrogen bond edges with N-O distance $<3.5$\,\AA). The complex is capped with Rank-2 cells which are derived from local cliques.

The ground truth labels of the complex are derived from Secondary Structure Elements (SSEs), adopting the standard three-state classification defined by DSSP~\cite{kabsch1983dictionary}: $\alpha$-helices (H), characterized by stabilized spiral backbone conformations; $\beta$-sheets (E), consisting of extended parallel or anti-parallel strands; and Coils (C), representing flexible loops and unstructured regions.

Since our ground-truth clusters are distributed throughout the shape, as observed in \cref{fig:hksCluster} (left), we perform the clustering on the HKS features induced by the corresponding operator. 
This yields a multi-scale feature vector per residue that is then clustered via k-means ($k=3$: Helix, Sheet, Coil). 
We report accuracy via Hungarian matching.

\paragraph{Results} Our method achieves $70.9\%$ accuracy versus $46.9\%$ for the baseline: a 24-point improvement. 
Further \cref{fig:hksCluster} visualizes a representative protein: the Graph Laplacian clusters by spatial proximity (middle), creating large uniform regions, while our effective operator $\mathbf{S_{\varepsilon}}$ (right) correctly recovers the fragmented, interleaved SSE structure (left). 
The complement's induced connectivity within Rank-2 cells enables spectral separation of topologically distinct SSEs even when spatially adjacent.

\begin{figure}
    \centering
    \includegraphics[width=\linewidth]{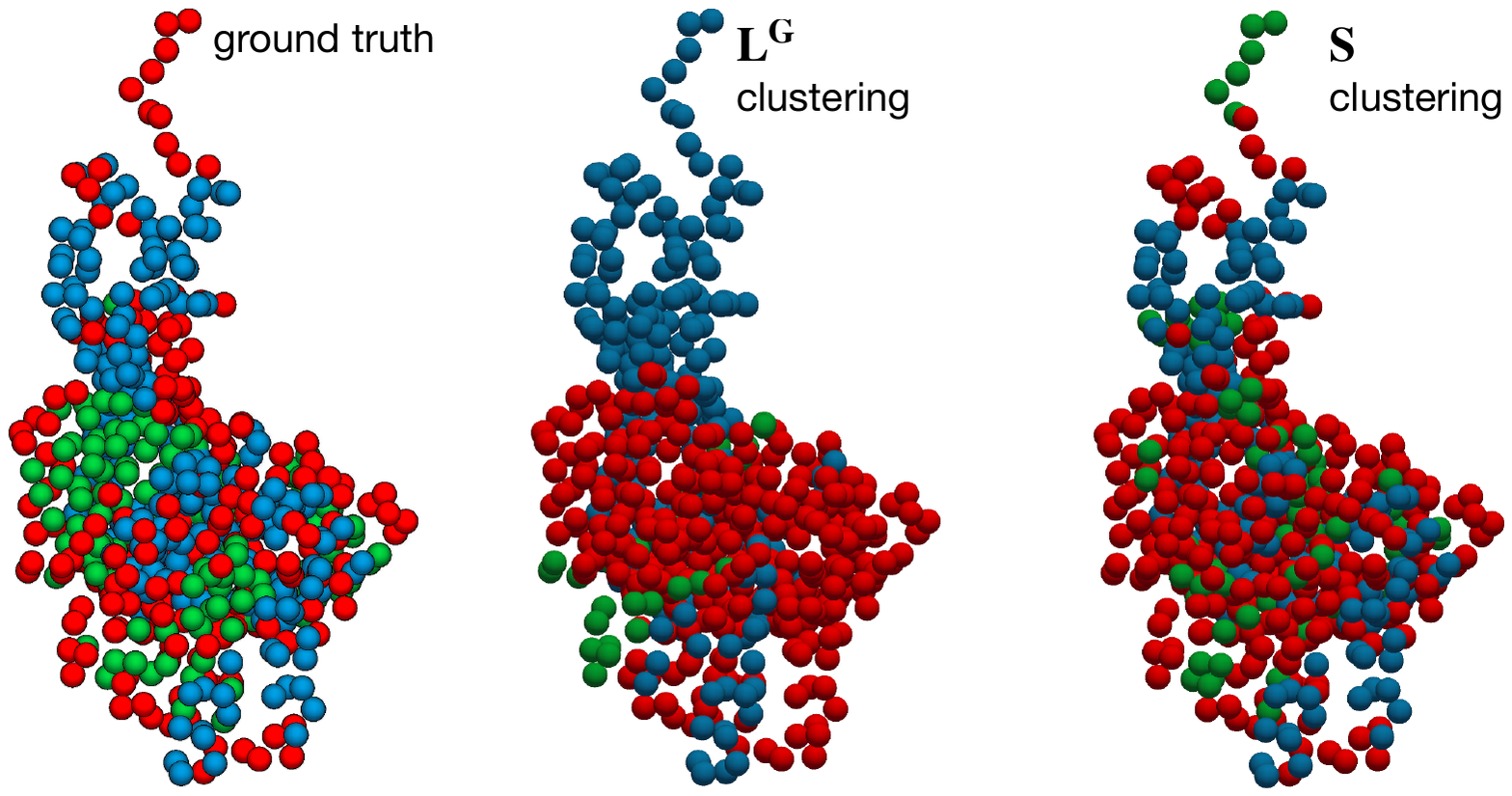}
    \caption{Unsupervised protein segmentation on \textsc{1LYZ} from the Topotein benchmark~\cite{wang2025topotein}. Left: Ground truth. Middle: The standard Graph Laplacian (46.9\%) fails to capture interleaved motifs. Right: Our effective operator (70.9\%) successfully recovers these non-local structures, significantly outperforming the baseline.
    }
    \label{fig:hksCluster}
    \vspace{-6.5mm}
\end{figure}

\subsection{Normalizing Cuts on Lifted Graphs}
\label{subsec:normalising_cuts}
\paragraph{Setup} We evaluate our effective operator on graph clustering using normalized cuts on two benchmarks: synthetic stochastic block models (SBM) \cite{holland1983stochastic} and the \textit{real-world} networks: 
Karate \cite{zachary1977information}, Football \cite{girvan2002community}, Misérables \cite{knuth1993stanford}, Books \cite{krebs2004political} \& Dolphins \cite{lusseau2003bottlenose}.

Each graph is represented as a combinatorial complex with Rank-0 cells (nodes), Rank-1 cells (edges), and Rank-2 cells (triangles and tetrahedra when present). $\mathbf{S_{\varepsilon}}$ and $\textbf{L}^{(\text{mult})}$ incorporate higher-order connectivity, while the baseline uses the standard graph Laplacian $\textbf{L}^{\textbf{G}}$. 
We apply spectral clustering with normalized cuts and report clustering accuracy.

\paragraph{Results} \cref{tab:benchmark_summary} and \cref{tab:real_world_benchmarks} show consistent improvements of the collapsed operator over the standard Laplacian $\mathbf{L^G}$. 
On synthetic SBM benchmarks (\cref{tab:benchmark_summary}), our method achieves substantial gains on Dense-Cliques +3\% and Unequal-Sizes + 6\%. 
$\mathbf{S_{\varepsilon}}$ particularly excels when higher-order structures (triangles $T_3$, tetrahedra $T_4$) are abundant, as these Rank-2 cells induce  Schur-mediated damping of within-cell pairwise couplings, reshaping the low-frequency spectrum and improving cluster separability.
On real-world networks (\cref{tab:real_world_benchmarks}), we observe moderate but consistent improvements: +3\% on College Football, +2\% on Les Mis\'{e}rables. $\mathbf{S_{\varepsilon}}$ effectively leverages higher-order topology to guide spectral partitioning. \looseness=-1

\begin{table}[t]
    \centering
    \caption{SBM \cite{holland1983stochastic} results. Best in \textbf{bold}. Base=Baseline, Den=Dense-Cliques, +N=Cliques+Noise, Plt=Planted-Clique, SD=Small-Dense, LS=Large-Sparse, Uneq=Unequal-Sizes, Br=Bridge-Only.}
    \label{tab:benchmark_summary}
    \scriptsize
    \setlength{\tabcolsep}{1.5pt}
    \resizebox{\columnwidth}{!}{%
    \begin{tabular*}{\columnwidth}{@{\extracolsep{\fill}} lrrrrrrrr @{}}
        \toprule
        & \rotatebox{60}{\textbf{Base}} & \rotatebox{60}{\textbf{Den}} & \rotatebox{60}{\textbf{+N}} & \rotatebox{60}{\textbf{Plt}} & \rotatebox{60}{\textbf{SD}} & \rotatebox{60}{\textbf{LS}} & \rotatebox{60}{\textbf{Uneq}} & \rotatebox{60}{\textbf{Br}} \\
        \midrule
        $N$   & 80  & 80   & 80   & 80   & 72  & 120 & 90  & 80  \\
        $E$   & 363 & 554  & 630  & 385  & 340 & 400 & 597 & 452 \\
        $T_3$ & 174 & 1580 & 1053 & 941  & 594 & 103 & 899 & 940 \\
        $T_4$ & 21  & 2261 & 789  & 2667 & 572 & 4   & 556 & 859 \\
        \midrule
        $\textbf{L}^\mathbf{G}$   & .58 & .96 & .70 & \textbf{.96} & \textbf{.74} & .77 & .92 & \textbf{1.0} \\
        $\mathbf{L^{(\text{mult})}}$   & .61 & .90 & .73 & .65 & .70 & \textbf{.82} & .91 & .98 \\
        $\mathbf{S_{\varepsilon}}$ &  \textbf{.61} & \textbf{.98} & \textbf{.84} &  \textbf{.96} & \textbf{.74} & .78 & \textbf{.98} & \textbf{1.0} \\
        \bottomrule
    \end{tabular*}
    }
    \vspace{-6mm}
\end{table}

\vspace{-1mm}
\subsection{Spectrum Regression}
\vspace{-1mm}
\label{subsec:spectral_regression}
\textbf{Setup.} We evaluate spectral features on the Mantra dataset~\cite{ballester2024mantra}, a benchmark for distinguishing different simplicial complexes. 
We compare three spectral feature extraction methods: \textbf{spectrum $\mathbf{L^G}$} (eigenvalues of the standard graph Laplacian), \textbf{spectrum $\mathbf{L}^{(\text{mult})}$} (eigenvalues of the multi Laplacian), and \textbf{spectrum $\mathbf{S_{\varepsilon}}$} (eigenvalues of the collapsed effective operator).
An MLP then classifies the Betti numbers of each complex.
We report mean accuracy $\pm$ standard deviation over five runs.

\textbf{Results.} \cref{tab:mantra} shows that all three spectral methods achieve perfect classification on the simple $\beta_0$ class (1.00 accuracy). 
However, the spectrum of $\mathbf{S_{\varepsilon}}$ significantly outperforms the standard graph spectrum on the more challenging $\beta_1$ class: 0.95 vs.\ 0.92 accuracy for the graph Laplacian, and 0.97 for multi-laplacian. 
On $\beta_2$, the collapsed operator maintains competitive performance (0.95) while the graph Laplacian achieves 0.92, and the multi-laplacian reaches 0.97. 
Notably, all topological methods substantially outperform standard graph neural networks (GAT and GCN achieve only 0.31-0.33 on $\beta_1$), demonstrating that spectral signatures are sufficient to classify these quantities, results that, to the best of our knowledge, have not been reported.

\begin{table}[t]

    \centering
    \caption{Performance on \textit{real-world} datasets (best in \textbf{bold}). Clustering accuracy is reported.}
    \label{tab:real_world_benchmarks}
    \small
    \setlength{\tabcolsep}{5pt}
    \resizebox{\columnwidth}{!}{%
    \begin{tabular}{lrrrrr}
        \toprule
        \textbf{Metric} & \textbf{Karate} & \textbf{Football} & \textbf{Misérables} & \textbf{Books} & \textbf{Dolphins} \\
        \midrule
        $N$     & 34  & 115 & 77  & 105 & 62  \\
        $E$     & 78  & 613 & 254 & 441 & 159 \\
        $T_3$   & 45  & 810 & 467 & 560 & 95  \\
        $T_4$   & 11  & 732 & 639 & 319 & 27  \\
        \midrule
        $\mathbf{L}^\mathbf{G}$     & \textbf{0.97} & 0.47 & 0.64 & \textbf{0.82} & \textbf{0.69} \\
        $\mathbf{L}^{(\text{mult})}$   & {0.94} & 0.46 & 0.52 & 0.73 & \textbf{0.69} \\
        $\mathbf{S_{\varepsilon}}$  & \textbf{0.97} & \textbf{0.50} & \textbf{0.66} & \textbf{0.82} & \textbf{0.69} \\
        \bottomrule
    \end{tabular}
    }%
    \vspace{-5mm}
\end{table}

\textbf{Discussion on Operator Trade-offs.} The collapsed operator is specifically designed for regimes where higher-order topological structure carries a discriminative signal for vertex-level tasks, such as tertiary structure in proteins \cref{subsec:clustering_sparsity} and normalized cuts \cref{subsec:experiments_smooth}. In these regimes, the collapsed operator outperforms the graph Laplacian by explicitly aggregating cross-rank interactions into a vertex-level representation. 
The inherent trade-off is that collapsing across ranks does not fully preserve rank-local topological invariants, such as individual Betti numbers. 
On benchmarks like MANTRA, where tasks depend heavily on rank-local signals, rank-wise methods that retain the full Hodge structure (e.g., the multi-order Laplacian) are naturally better suited. 
Nevertheless, our results demonstrate that the collapsed operator retains substantially more topological information than the standard graph Laplacian, striking a practical balance for node-level applications.

\subsection{Node Classification on Proteins}
\label{sec:node_classification}

We evaluate our operator on the vertex-level task of \emph{node classification}
on proteins, a setting where non-local topological structure provides a
useful inductive bias (\cref{tab:node_classification}).

\textbf{Experimental Setup.} We construct a CC for proteins (\cref{subsec:spectral_clustering}) and perform transductive node classification (analogous to GCN on Cora \cite{kipf2016semi}). We classify amino acids based on their relative molecular position using two label sets: \textit{Contact} (3 structural classes) and \textit{ResType} (23 classes based on amino acid type). We compare positional encodings (PE) derived from the standard graph Laplacian against those from our operator, $\mathbf{S}_{\varepsilon}$ (\textbf{SchurPE}), integrated into GCN \cite{kipf2016semi} and GraphTransformer (GT) \cite{dwivedi2020generalization}.

\begin{table}[b]
\vspace{-5.0mm}
\centering
\caption{Transductive node classification accuracy on protein molecular structures. Best in \textbf{bold}.}
\label{tab:node_classification}
\begin{tabular}{lcc}
\toprule
\textbf{Model + PE} & \textbf{Contact} & \textbf{ResType} \\
\midrule
GCN + NoPE & $0.471 \pm 0.013$ & $0.524 \pm 0.019$ \\
GT + NoPE & $0.463 \pm 0.034$ & $0.773 \pm 0.019$ \\
GCN + LaplacePE & $0.480 \pm 0.010$ & $0.582 \pm 0.033$ \\
GT + LaplacePE & $0.551 \pm 0.004$ & $0.762 \pm 0.028$ \\
\textbf{GCN + SchurPE} & $\mathbf{0.494 \pm 0.013}$ & $\mathbf{0.666 \pm 0.081}$ \\
\textbf{GT + SchurPE} & $\mathbf{0.573 \pm 0.009}$ & $\mathbf{0.789 \pm 0.019}$ \\
\bottomrule
\end{tabular}
\vspace{-1.0mm}
\end{table}

\textbf{Results.} As shown in Table~\ref{tab:node_classification}, SchurPE consistently improves performance across both architectures and classification tasks. The significant gains indicate that $\mathbf{S}_{\varepsilon}$ provides superior signals for capturing structural nuances compared to standard Laplacian embeddings. This suggests that our proposed operator captures useful higher-order information beyond simple clustering, which is particularly salient for node-level prediction tasks.

\begin{table}[t]
\centering
\caption{Results on determining the betti number of the complexes defined in the Mantra dataset ($2 - \mathcal{M}^0$) \cite{ballester2024mantra}. %
\vspace{-1mm}
}
\resizebox{\columnwidth}{!}{%
\begin{tabular}{lccc}
& \multicolumn{3}{c}{Accuracy} \\
\cmidrule(lr){2-4}
\textsc{Model (Class)} & \multicolumn{1}{c}{$\beta_0$} & \multicolumn{1}{c}{$\beta_1$} & \multicolumn{1}{c}{$\beta_2$} \\
\midrule
MLP  & \textbf{1.00 $\pm$ 0.00}  & 0.31 $\pm$ 0.00 & 0.92 $\pm$ 0.00 \\
GAT \cite{velivckovic2017graph}   & \textbf{1.00 $\pm$ 0.00}  & 0.31 $\pm$ 0.00 & 0.92 $\pm$ 0.00  \\
GCN \cite{kipf2016semi}  & \textbf{1.00 $\pm$ 0.00} & 0.31 $\pm$ 0.00 & 0.92 $\pm$ 0.00  \\
TAG \cite{du2017topology}   & \textbf{1.00 $\pm$ 0.00}  & 0.32 $\pm$ 0.01 & 0.92 $\pm$ 0.00\\
UniMP \cite{shi2020masked} & \textbf{1.00 $\pm$ 0.00}  & 0.33 $\pm$ 0.00 & 0.92 $\pm$ 0.00 \\
CellMP \cite{bodnar2021weisfeiler} & 0.46 $\pm$ 0.50  & 0.39 $\pm$ 0.35 & 0.46 $\pm$ 0.44\\
\cmidrule(lr){1-4}
CT \cite{ballester2024attending}    & \textbf{1.00 $\pm$ 0.00}  & 0.93 $\pm$ 0.00 & 0.93 $\pm$ 0.00 \\
DECT \cite{roell2023differentiable} & \textbf{1.00 $\pm$ 0.00}  & 0.32 $\pm$ 0.00 & 0.92 $\pm$ 0.00 \\
SAN \cite{goh2022simplicial}   & 0.09 $\pm$ 0.04  & 0.12 $\pm$ 0.10 & 0.52 $\pm$ 0.14\\
SCCN \cite{yang2022efficient}  & \textbf{1.00 $\pm$ 0.00}  & 0.93 $\pm$ 0.00 & 0.93 $\pm$ 0.00\\
SCCNN \cite{yang2023convolutional} & 0.00 $\pm$ 0.00  & 0.03 $\pm$ 0.02 & 0.33 $\pm$ 0.37 \\
SCN \cite{wu2023simplicial}   & 0.33 $\pm$ 0.38 & 0.21 $\pm$ 0.26 & 0.62 $\pm$ 0.36 \\
\cmidrule(lr){1-4}
\cmidrule(lr){1-4}
Spectrum $\mathbf{L}^{\mathbf{G}}$ & \textbf{1.00 $\pm$ 0.00} & 0.92 $\pm$ 0.00 & 0.92 $\pm$ 0.00 \\
Spectrum $\mathbf{L}^{(\textbf{mult})}$ & \textbf{1.00 $\pm$ 0.00} & \textbf{0.97 $\pm$ 0.00} & \textbf{0.97 $\pm$ 0.00} \\
Spectrum $\mathbf{S_\varepsilon}$ & \textbf{1.00 $\pm$ 0.00} & 0.95 $\pm$ 0.00 & 0.95 $\pm$ 0.00 \\
\bottomrule
\end{tabular}%
}
\label{tab:mantra}
\vspace{-5.5mm}
\end{table}

\paragraph{Computational Complexity}
Our computational improvements stem from architectural bottlenecks of HOMP \cite{hajij2022topological}.
Let $M$ denote total incidences across all ranks, $m$ the edge count, and $d$ the feature dimension. 
A single HOMP layer costs $O(M d^2)$ as it propagates features across all cells, whereas a GNN layer costs $O(m d^2)$. 
Our approach amortizes this cost to pre-processing: each forward pass thus achieves $O(M/m)$ speedup, since $M \gg m$.

%% file: sections/1_related_work.tex
\vspace{-1mm}
\section{Related Work}
\vspace{-1mm}

\paragraph{Higher-Order Laplacians}
The Hodge Laplacian \cite{eckmann1944harmonische} generalises spectral graph theory to simplicial complexes (SCs) through boundary operators, forming the basis for signal processing on $k$-chains \cite{barbarossa2020topological, schaub2021signal}. Recent works \cite{huang2024higher, memoli2022persistent} have generalized the Hodge Laplacian to overcome the representational shortcomings of the standard definition.
The Multiorder Laplacian \cite{lucas2020multi} sums simplex contributions but models them as additive stiffness rather than mediating conductance. 
\citet{vigano2026root} introduced a normalized Hodge Laplacian for SCs.
In \cite{calmon2023dirac} the Laplacian spaces are coupled together. However, the resulting operator is not necessarily PSD nor does it reduce to the vertex space.
Our proposed \textit{effective operators} depart from these approaches by marginalizing higher-order ranks into a single operator.

\paragraph{Schur Complements on Graphs}
The Schur complement has found analogous applications in electrical networks (i.e., effective resistance)~\cite{babic2002resistance,devriendt2022effective, dorfler2012kron} and Gaussian graphical models through marginalization~\cite{lauritzen1996graphical}. 
We repurpose this tool for a different goal: collapsing \emph{topological ranks} rather than spatial subsets, producing an implicit vertex operator that captures higher-order interactions.

\paragraph{Topological Deep Learning (TDL)}
TDL lifts representation learning from graphs to simplicial, cellular, and combinatorial complexes. Higher-order message passing (HOMP) was introduced for simplicial complexes \cite{bodnar2021weisfeiler2} and later generalized to broader complex classes \cite{bodnar2021weisfeiler, hajij2022topological}, improving the expressivity of message-passing neural networks beyond pairwise interactions. These are later generalized within a \emph{copresheaf}-based TDL framekwork~\cite{hajij2026copresheaf}. Recent work has focused on deep architectures~\cite{korkmaz2025cumperlay,danjun2026topoor}, unifying benchmarks, architectures for higher-order learning \cite{telyatnikov2025topobench, papillon2024topotune} and neural operators~\cite{bastian2026topological}. 

\paragraph{Spectral Positional Encodings}
Positional encodings enhance GNN expressivity by providing nodes with structural information beyond local neighborhoods. 
Spectral approaches based on Laplacian eigenfunctions are particularly effective: Laplacian eigenvectors serve as positional encodings \cite{dwivedi2020generalization, kreuzer2021rethinking} and have been extended to higher-order spaces \cite{barsbey2025higher}, while eigenvalue-weighted combinations yield multi-scale geometric descriptors.
Heat Kernel Signatures (HKS) \cite{sun2009concise} and its graph application \cite{donnat2018learning} are
encoding geometry through heat diffusion at multiple time scales. 
More broadly, Laplacian-based kernels such as RBF and diffusion kernels are widely used in graph signal processing \cite{smola2003kernels}.

%% file: sections/6_conclusion.tex
\vspace{-2mm}
\section{Concluding Remarks} 
We introduce \textit{Collapsed Effective Operators} as a framework to translate higher-order topological structures into effective vertex-level dynamics. 
Unlike additive approaches that treat each rank independently, our method captures the emergent couplings induced in higher-order cells at the vertex level. 
Our theoretical analysis revealed that the resulting operators encode non-local topological interactions and have an interpretable spectral and variational structure. 
Empirically, we demonstrate this collapsed operator is useful to analyze node-level dynamics in various settings.

\paragraph{Limitations and future work}
The Schur complement does not preserve the full graded spectrum: rank-local invariants may be partially lost, and the spectral gap can shrink under reduction (\cref{rem:cheeger}), leaving Cheeger-type lower bounds for $\mathbf{S}$ open. Moreover, $\mathbf{S}$ is typically dense, increasing explicit construction costs despite admitting implicit application (\cref{sec:scalableComputation}). Finally, we use per-rank weights $\beta_k,\gamma_k$ rather than metric-aware per-cell weights; incorporating Hodge-star, cotangent, or distance-based weights is a natural extension. Further directions include time-varying topologies \citep{einizade2025continuous} and learned effective operators.
\newpage

\section*{Acknowledgements}
The authors thank the ICML reviewers as well as Mustafa Hajij for their helpful feedback on preliminary versions of the manuscript.
The authors also acknowledge the support from the UK AI Research Resource (AIRR) through grant 0251-4584-0945-1.
T. B. was supported by the UKRI Engineering and Physical Sciences Research Council (EPSRC) through the Future Leaders Fellowship [grant number MR/Y018818/1]. 
L.B. was supported by the UK Royal Society through grant NIF/R1/254128. 
M.K. and V.G. acknowledge support from the Finnish Doctoral Program Network in Artificial Intelligence (AI-DOC). 

\section*{Impact Statement}
This work introduces a spectral operator for higher-order structures.
Although the contribution is primarily theoretical, we demonstrate its
practical utility in spectral clustering, protein structure segmentation, and
node-level molecular and protein representation learning.

Potential risks are those shared broadly by methodological advances in
topological deep learning: improved structural methods could, in principle,
facilitate privacy violations in social network analysis or adversarial
applications in drug design. As an indirect, methodological contribution,
we view these concerns as community-wide rather than specific to our work,
and we advocate for responsible deployment.

%% file: sections/appendix.tex
\makeatletter
\ifcsname startcontents\endcsname
\else
  \newcommand{\startcontents}[1][]{%
    \def\addcontentsline##1##2##3{%
      \def\ceo@toc{toc}%
      \def\ceo@target{##1}%
      \ifx\ceo@target\ceo@toc
        \addtocontents{apc}{\protect\contentsline{##2}{##3}{\thepage}{}}%
      \fi
    }%
  }
  \newcommand{\printcontents}[4][]{#4\@starttoc{apc}}
\fi
\makeatother

\appendix
\section*{\centering \Large Appendix}
\startcontents[appendices]
\setcounter{tocdepth}{1}
\printcontents[appendices]{0}{1}{\section*{Table of Contents}}

\section{Formal Definitions: Simplicial, CW and Combinatorial Complexes}
\label{subsec:definitions}
Combinatorial complexes are introduced as generalized topological data structures in~\cite{hajij2023combinatorial,hajij2022topological} and refer the reader to \cite{boissonat_geometrical_2018,chazal2021introduction,zomorodian2012topological,hatcher2002algebraic} for comprehensive treatments of algebraic topology We provide a brief review below. %
\begin{definition}[Simplicial Complex]
\label{def:simplicial_complex_app}
A simplicial complex $\mathcal{S}$ a complex where every cell is a simplex. Formally, $\mathcal{S}$ is a finite collection of sets (simplices) closed under the subset operation: if $\sigma \in \mathcal{S}$ and $\tau \subseteq \sigma$, then $\tau \in \mathcal{S}$. A $k$-simplex is a set of cardinality $k+1$.
\end{definition}

The boundary operator is given explicitly:
\begin{equation}
\partial_k([v_0, \ldots, v_k]) = \sum_{i=0}^k (-1)^i [v_0, \ldots, \hat{v}_i, \ldots, v_k],
\end{equation}
where $\hat{v}_i$ denotes omission of vertex $v_i$. This ensures $\partial_{k-1} \partial_k = 0$ algebraically.

\begin{definition}[CW Complex]
\label{def:cw_complex_app}
A CW complex $\mathcal{K}$ is a topological space constructed inductively by attaching cells. Formally, $\mathcal{K}$ is equipped with:
\begin{itemize}
    \item A filtration $\mathcal{K}^0 \subseteq \mathcal{K}^1 \subseteq \cdots \subseteq \mathcal{K}^d = \mathcal{K}$, where $\mathcal{K}^k$ is the $k$-skeleton.
    \item For each $k$-cell $e^k_\alpha$, a characteristic map $\Phi_\alpha: D^k \to \mathcal{K}^k$ from the closed $k$-disc $D^k$ such that:
    \begin{enumerate}
        \item $\Phi_\alpha$ restricts to a homeomorphism $\mathrm{int}(D^k) \to e^k_\alpha$.
        \item The attaching map $\Phi_\alpha|_{\partial D^k}: S^{k-1} \to \mathcal{K}^{k-1}$ maps the boundary into the $(k-1)$-skeleton.
    \end{enumerate}
\end{itemize}
\end{definition}

\begin{definition}[Combinatorial Complex]
\label{def:combinatorial_complex_app}
A Combinatorial Complex (CC) $\mathcal{C}$ is a ranked set-system $\mathcal{C} = (X, \{X^k\}_{k=0}^d, \subseteq)$ where:
\begin{itemize}
    \item $X$ is a finite ground set (typically vertices).
    \item $X^k \subseteq 2^X$ is the collection of \textbf{rank-$k$ cells} (subsets of $X$).
    \item The \textbf{rank function} $\mathrm{rank}: 2^X \to \mathbb{N}$ assigns each cell its dimension.
    \item \textbf{No closure requirement}: if $\sigma \in X^k$, subsets $\tau \subset \sigma$ need not belong to $\mathcal{C}$.
\end{itemize}
The \textbf{unsigned incidence matrix} $\mathbf{B_k} \in \{0,1\}^{|X^{k-1}| \times |X^k|}$ encodes containment:
\begin{equation}
(\mathbf{B_k})_{ij} = \begin{cases}
1 & \text{if cell } \sigma^{k-1}_i \subseteq \sigma^k_j, \\
0 & \text{otherwise}.
\end{cases}
\end{equation}
In general, $\mathbf{B_{k-1}} \mathbf{B_k} \neq \mathbf{0}$, as composition $\mathbf{B}_{k-1} \mathbf{B_k}$ counts paths of membership through intermediate ranks, not boundaries. When restricted to simplices, CCs recover simplicial complexes with $\mathbf{B}_k = |\mathbf{D}_k|$ (absolute values).
\end{definition}

\begin{remark}
The distinction is fundamental: CW/simplicial complexes satisfy $\partial_{k-1} \partial_k = 0$ (topological closure), enabling Hodge theory. Combinatorial complexes permit $\mathbf{B}_{k-1} \mathbf{B}_k \neq 0$ (hierarchical co-occurrence), which is suitable for modelling non-conservative processes such as information diffusion or chemical reaction networks.
\end{remark}

\subsection{The Schur Complement}
\label{subsec:schur-complement}

A key tool in our approach to constructing an effective higher-order operator on the vertex set is the Schur complement, a fundamental linear-algebraic concept with important applications in convex optimization.
We follow the notation of \cite{boyd2004convex}, and refer the reader to this foundational text for a thorough treatment.

\begin{definition}[Schur Complement]
\label{def:schur-complement}
Let $\mathbf{M} \in \mathbb{R}^{(n+m) \times (n+m)}$ be a block matrix of the form
\begin{equation}
\mathbf{M} = \begin{bmatrix} \mathbf{A} & \mathbf{X} \\ \mathbf{Y} & \mathbf{D} \end{bmatrix},
\end{equation}
where $\mathbf{A} \in \mathbb{R}^{n \times n}$, $\mathbf{X} \in \mathbb{R}^{n \times m}$, $\mathbf{Y} \in \mathbb{R}^{m \times n}$, and $\mathbf{D} \in \mathbb{R}^{m \times m}$. If $\mathbf{D}$ is invertible, the \emph{Schur complement} of $\mathbf{D}$ in $\mathbf{M}$ is defined as
\begin{equation}
\mathbf{M}/\mathbf{D} \triangleq \mathbf{A} - \mathbf{X} \mathbf{D}^{-1} \mathbf{Y}.
\end{equation}
\end{definition}
A key result connects the positive definiteness of block matrices to their Schur complements.
\begin{theorem}[Schur Complement Lemma]
\label{thm:schur-complement}
Let $\mathbf{M}$ be a symmetric block matrix
\begin{equation}
\mathbf{M} = \begin{bmatrix} \mathbf{A} & \mathbf{X} \\ \mathbf{X}^\top & \mathbf{C} \end{bmatrix},
\end{equation}
with $\mathbf{A} \in \mathbb{S}^n$ and $\mathbf{C} \in \mathbb{S}^m$. Then the following equivalences hold:
\begin{enumerate}
    \item $\mathbf{M} \succ 0$ if and only if $\mathbf{A} \succ 0$ and $\mathbf{C} - \mathbf{X}^\top \mathbf{A}^{-1} \mathbf{X} \succ 0$.
    \item $\mathbf{M} \succ 0$ if and only if $\mathbf{C} \succ 0$ and $\mathbf{A} - \mathbf{X} \mathbf{C}^{-1} \mathbf{X}^\top \succ 0$.
\end{enumerate}
For positive semi-definiteness, if $\mathbf{A} \succeq 0$, then
\begin{equation}
\mathbf{M} \succeq 0 \quad \Longleftrightarrow \quad \mathbf{A} \succeq 0, \; \mathbf{C} - \mathbf{X}^\top \mathbf{A}^{\dagger} \mathbf{X} \succeq 0, \; (\mathbf{I} - \mathbf{A} \mathbf{A}^{\dagger}) \mathbf{X} = 0,
\end{equation}
where $\mathbf{A}^{\dagger}$ denotes the Moore--Penrose pseudoinverse.
\end{theorem}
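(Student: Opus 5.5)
The plan is to prove all three characterizations with one device: a block congruence that makes $\mathbf{M}$ block-diagonal. Congruence by an invertible matrix preserves both $\succ 0$ and $\succeq 0$, and a block-diagonal symmetric matrix is (semi)definite if and only if each diagonal block is.

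\textbf{Strict case, item 2.} Suppose $\mathbf{C}$ is invertible and set $\mathbf{S}=\mathbf{A}-\mathbf{X}\mathbf{C}^{-1}\mathbf{X}^\top$. Expanding the product directly gives
\begin{equation}
\mathbf{M}=\begin{bmatrix}\mathbf{I} & \mathbf{X}\mathbf{C}^{-1}\\ \mathbf{0} & \mathbf{I}\end{bmatrix}\begin{bmatrix}\mathbf{S} & \mathbf{0}\\ \mathbf{0} & \mathbf{C}\end{bmatrix}\begin{bmatrix}\mathbf{I} & \mathbf{0}\\ \mathbf{C}^{-1}\mathbf{X}^\top & \mathbf{I}\end{bmatrix}.
\end{equation}
The two outer factors are unit triangular, hence invertible, and each is the transpose of the other. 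Therefore $\mathbf{M}\succ0$ if and only if $\mathbf{S}\succ0$ and $\mathbf{C}\succ0$.

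One point needs care: the hypothesis "$\mathbf{C}\succ0$" must be available before we can write $\mathbf{C}^{-1}$. In the direction $\mathbf{M}\succ0\Rightarrow$, it comes for free, because $\mathbf{C}$ is a principal submatrix of $\mathbf{M}$ and hence positive definite. Item 1 follows by the mirror factorization that eliminates the off-diagonal blocks using $\mathbf{A}^{-1}$ instead of $\mathbf{C}^{-1}$.

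\textbf{Semidefinite case.} Here $\mathbf{A}$ may be singular, so $\mathbf{A}^{-1}$ is replaced by $\mathbf{A}^\dagger$. The key point is that the congruence still goes through provided $\mathbf{A}\mathbf{A}^\dagger\mathbf{X}=\mathbf{X}$. I expect the main obstacle to be proving that this range condition is \emph{necessary}, i.e.\ that $\mathbf{M}\succeq0$ forces $\operatorname{range}(\mathbf{X})\subseteq\operatorname{range}(\mathbf{A})$. My approach is a one-parameter perturbation argument.

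Take any $\mathbf{v}\in\ker\mathbf{A}$, any $\mathbf{w}$, and any $t\in\mathbb{R}$, and test $\mathbf{M}$ on the vector $(t\mathbf{v},\mathbf{w})$:
\begin{equation}
0\le 2t\,\mathbf{v}^\top\mathbf{X}\mathbf{w}+\mathbf{w}^\top\mathbf{C}\mathbf{w}.
\end{equation}
The right-hand side is affine in $t$, so it stays nonnegative for all $t$ only if its slope vanishes: $\mathbf{v}^\top\mathbf{X}\mathbf{w}=0$. Since $\mathbf{w}$ is arbitrary, this gives $\mathbf{X}^\top\mathbf{v}=\mathbf{0}$. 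Hence $\mathbf{X}^\top$ annihilates $\ker\mathbf{A}$, which means $\operatorname{range}(\mathbf{X})\subseteq(\ker\mathbf{A})^\perp=\operatorname{range}(\mathbf{A})$ because $\mathbf{A}$ is symmetric. Finally, $\mathbf{A}\mathbf{A}^\dagger$ is the orthogonal projector onto $\operatorname{range}(\mathbf{A})$, so this is exactly $(\mathbf{I}-\mathbf{A}\mathbf{A}^\dagger)\mathbf{X}=0$. Also, $\mathbf{A}\succeq0$ holds automatically as a principal submatrix of $\mathbf{M}$.

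\textbf{Closing both directions.} Once $\mathbf{A}\mathbf{A}^\dagger\mathbf{X}=\mathbf{X}$ is known, we use the identity $\mathbf{A}^\dagger\mathbf{A}\mathbf{A}^\dagger=\mathbf{A}^\dagger$ together with the symmetry of $\mathbf{A}^\dagger$ to verify
\begin{equation}
\mathbf{M}=\begin{bmatrix}\mathbf{I} & \mathbf{0}\\ \mathbf{X}^\top\mathbf{A}^\dagger & \mathbf{I}\end{bmatrix}\begin{bmatrix}\mathbf{A} & \mathbf{0}\\ \mathbf{0} & \mathbf{C}-\mathbf{X}^\top\mathbf{A}^\dagger\mathbf{X}\end{bmatrix}\begin{bmatrix}\mathbf{I} & \mathbf{A}^\dagger\mathbf{X}\\ \mathbf{0} & \mathbf{I}\end{bmatrix}.
\end{equation}
The off-diagonal blocks of this product equal $\mathbf{A}\mathbf{A}^\dagger\mathbf{X}=\mathbf{X}$, and the lower-right block equals $\mathbf{X}^\top\mathbf{A}^\dagger\mathbf{A}\mathbf{A}^\dagger\mathbf{X}+\mathbf{C}-\mathbf{X}^\top\mathbf{A}^\dagger\mathbf{X}=\mathbf{C}$.

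\emph{Necessity.} If $\mathbf{M}\succeq0$, the previous paragraph gives the range condition, so this factorization is valid. By congruence, the middle block-diagonal matrix is then PSD, which yields $\mathbf{C}-\mathbf{X}^\top\mathbf{A}^\dagger\mathbf{X}\succeq0$.

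\emph{Sufficiency.} Conversely, if $\mathbf{A}\succeq0$, $\mathbf{C}-\mathbf{X}^\top\mathbf{A}^\dagger\mathbf{X}\succeq0$ and the range condition all hold, the same factorization exhibits $\mathbf{M}$ as congruent to a PSD block-diagonal matrix, so $\mathbf{M}\succeq0$.
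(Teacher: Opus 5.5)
Your proof is correct and complete. The paper gives no proof of this lemma. It states it in the appendix and refers to \cite{boyd2004convex}, where the argument is variational: one minimizes $\mathbf{u}^\top\mathbf{A}\mathbf{u}+2\mathbf{u}^\top\mathbf{X}\mathbf{w}+\mathbf{w}^\top\mathbf{C}\mathbf{w}$ over $\mathbf{u}$. With $\mathbf{A}\succeq0$, the infimum equals $\mathbf{w}^\top(\mathbf{C}-\mathbf{X}^\top\mathbf{A}^{\dagger}\mathbf{X})\mathbf{w}$ if $\mathbf{X}\mathbf{w}\in\operatorname{range}(\mathbf{A})$, and $-\infty$ otherwise. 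The range condition therefore appears as a boundedness-below condition.

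You instead use a block $LDL^{\top}$ congruence together with Sylvester's law of inertia. Your test vector $(t\mathbf{v},\mathbf{w})$ with $\mathbf{v}\in\ker\mathbf{A}$ is exactly the direction along which the variational infimum diverges, so the two arguments share the same core.

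Each route buys something different. The variational route fits the paper's own motivation of $\mathbf{S}$ in \cref{sec:method}, where $\mathbf{S}$ is obtained by minimizing the energy over the eliminated cells via $\mathbf{z}^\star(\mathbf{u})=-\mathbf{C}^{-1}\mathbf{X}^\top\mathbf{u}$. Your factorization gives slightly more. The inertia of $\mathbf{M}$ is the sum of the inertias of $\mathbf{A}$ and of the Schur complement, so the strict and semidefinite cases come from one identity. It also immediately yields $\mathbf{0}\preceq\mathbf{S}$ in \cref{prop:spectral_bounds}, which the paper states without proof.

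Your perturbation step also has a useful side effect here. Applied to the $\mathbf{C}$ block instead of $\mathbf{A}$, it shows that $\mathbf{L}^\star\succeq0$ alone forces $\ker\mathbf{C}\subseteq\ker\mathbf{X}$. The paper re-derives this inclusion in the proof of \cref{prop:regularization_error} by a longer route: the pairwise $Q_k$ decomposition and a case split on $\beta_1$.
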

\subsection{Incidence Structures of the Graded Laplacian: Homological vs. Combinatorial}
\label{subsec:incidence_structures}

The structural behavior of the proposed Graded Laplacian depends critically on the choice of rank-to-rank operator $\mathbf{B}_k$ mapping signals from rank $k$ to rank $k-1$.
In the main framework, $\mathbf{B}_k$ is a cellular boundary matrix on a CW complex.
For the incidence-based realization on combinatorial complexes, the same block algebra can instead use unsigned incidence matrices.
This distinction separates two different topological priors:

\paragraph{The Homological Setting (Simplicial/CW Complexes)}
This setting recovers standard algebraic topology, where $\mathbf{B}_k$ is defined as the \textbf{signed boundary matrix} $\mathbf{D}_k$.
\begin{itemize}
    \item \textbf{Chain Property:} The structure satisfies the fundamental algebraic constraint $\mathbf{D}_{k-1} \mathbf{D}_k = 0$, reflecting the geometric principle that the boundary of a boundary is empty.
    \item \textbf{Hodge Theory:} This orthogonality enables the Hodge decomposition of the resulting Laplacian, making this setting ideal for modelling conservative flows, fluxes, and homological features (holes, voids).
\end{itemize}

\paragraph{The Combinatorial Setting (e.g., including set relationship)}
This setting relaxes geometric constraints to model general set-based interactions. Here, $\mathbf{B}_k$ is defined as the \textbf{unsigned incidence matrix} $\mathbf{I}_k$, where $(I_k)_{ij} = 1$ if cell $j$ contains sub-cell $i$.
\begin{itemize}
    \item \textbf{Generalized Connectivity:} Unlike the homological case, this setting allows for $\mathbf{I}_{k-1} \mathbf{I}_k \neq \mathbf{0}$. This non-vanishing product captures hierarchical co-occurrences (e.g., a node belonging to an edge that belongs to a face) that drive synchronisation and contagious processes rather than conservative flows.
    \item \textbf{No Closure:} The complex requires no geometric closure, allowing for flexible modelling of arbitrary higher-order relationships.
\end{itemize}

\subsection{Rank Completion of Combinatorial Complexes}
\label{subsec:rank_completion}

In the main text we describe a ``rank filling'' step to apply adjacent-rank incidence operators to CCs with skipped ranks.
\Cref{fig:rank_completion} illustrates this operation for a relation that skips two ranks.

\begin{figure}[t]
    \centering
    \includegraphics[width=0.78\textwidth]{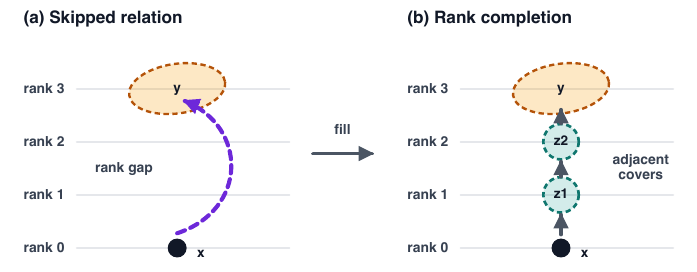}
    \caption{Rank completion for a skipped containment relation in a CC. A direct relation $x\prec y$ with a rank gap is replaced by a chain through formal intermediate elements, producing adjacent-rank cover relations for the incidence matrices used by the Graded Laplacian.}
    \label{fig:rank_completion}
    \vspace{-5.0mm}
\end{figure}

Let $\mathcal{C}=(S,\mathcal{X},\operatorname{rk})$ be a CC and let $\mathcal{R}\subseteq \mathcal{X}\times\mathcal{X}$ be the directed containment relations we choose to encode, whose transitive closure gives the containment relation of interest.
If $(x,y)\in\mathcal{R}$ with $a=\operatorname{rk}(x)$, $b=\operatorname{rk}(y)$, and $b>a+1$, replace this skipped relation by a chain
\begin{equation}
x=z_a \prec z_{a+1}^{(x,y)} \prec \cdots \prec z_{b-1}^{(x,y)} \prec z_b=y,
\end{equation}
where each $z_\ell^{(x,y)}$ is a formal element of rank $\ell$.
Adjacent-rank relations are kept unchanged.
Applying this replacement to all skipped relations gives a finite graded poset $\widehat{\mathcal{C}}$ with rank sets $\widehat{\mathcal{X}}_k$ and cover relations only between adjacent ranks.

The adjacent incidence matrices used in \cref{eq:graded_laplacian} are then
\begin{equation}
[\widehat{\mathbf{I}}_{k-1,k}]_{pq}=1
\quad\Longleftrightarrow\quad
\hat{\sigma}_p \prec \hat{\tau}_q,
\end{equation}
for $\hat{\sigma}_p\in\widehat{\mathcal{X}}_{k-1}$ and $\hat{\tau}_q\in\widehat{\mathcal{X}}_k$.
This construction preserves reachability between original cells: $x$ is contained in $y$ in the encoded relation if and only if there is a directed chain from $x$ to $y$ in $\widehat{\mathcal{C}}$.
It does not add a boundary map or a homological interpretation; it only supplies adjacent-rank incidence matrices.
Consequently, replacing $\mathbf{B}_k$ by $\widehat{\mathbf{I}}_{k-1,k}$ yields incidence-induced adjacency blocks, not Hodge Laplacians.

\section{Additional Information on experiments}
\label{sec:expDetails}
\subsection{Hyperparameters in Normalizing Cuts}
\label{app:hpNormalisingCuts}
In the following section, we report the selected parameters for each experimental setup. \cref{tab:hyperparams} contains the optimal hyperparameters on the real-world dataset and \cref{tab:sbm_hyperparams} contain the optimal hyperparameters on the SBM datasets.

\begin{table}[h]
\centering
\caption{Optimal hyperparameters selected via grid search to maximise accuracy.}
\label{tab:hyperparams}

\begin{minipage}[t]{0.56\textwidth}
\centering
\subcaption{Real-world datasets}
\label{tab:hyperparams_real}
\begin{tabular}{lcccc}
\toprule
\multirow{2}{*}{Dataset} & \multicolumn{2}{c}{Multi-Order} & \multicolumn{2}{c}{Collapsed Eff.} \\
\cmidrule(lr){2-3} \cmidrule(lr){4-5}
& $\gamma_0$ & $\gamma_1$ & $\varepsilon$ & $\beta_1$ \\
\midrule
Karate Club     & 0.1 & 0.1 & 0.5 & 1.0  \\
Football        & 5.0 & 0.1 & 0.1 & 1.0  \\
Les Miserables  & 0.1 & 0.1 & 1.0 & 10.0 \\
Political Books & 0.5 & 0.1 & 0.1 & 0.1  \\
Dolphins        & 0.1 & 0.1 & 0.5 & 10.0 \\
\bottomrule
\end{tabular}
\end{minipage}
\hfill
\begin{minipage}[t]{0.42\textwidth}
\centering
\subcaption{Stochastic Block Model variants}
\label{tab:sbm_hyperparams}
\begin{tabular}{lcc}
\toprule
Dataset & $\mathbf{L}^{(\text{mult})}$ & $\mathbf{S}$ \\
& $(\gamma_0, \gamma_1)$ & $(\varepsilon, \beta_1)$ \\
\midrule
SBM Baseline       & (5.0, 0.1) & (0.1, 0.5)        \\
SBM Dense-Cliques  & (0.1, 0.1) & ($10^{-6}$, 0.5)  \\
SBM Cliques+Noise  & (0.1, 0.1) & ($10^{-6}$, 0.5)  \\
SBM Planted-Clique & (0.1, 0.1) & (1.0, 2.0)        \\
SBM Small-Dense    & (0.1, 1.0) & (1.0, 0.5)        \\
SBM Large-Sparse   & (1.0, 5.0) & ($10^{-6}$, 0.5)  \\
SBM Unequal-Sizes  & (5.0, 5.0) & (0.5, 2.0)        \\
SBM Bridge-Only    & (0.1, 0.1) & ($10^{-6}$, 0.5)  \\
\bottomrule
\end{tabular}
\end{minipage}
\end{table}

\subsection{Spectral HKS Clustering}
\label{app:hks}

To capture multi-scale geometric properties in the clustering of graph-structured domains, we employ Heat Kernel Signatures (HKS)~\citep{sun2009concise}, a useful means for extending spectral positional encodings inspired by heat diffusion.

\paragraph{Mathematical Foundation}
Consider the heat equation on a graph with discrete Laplacian $\mathbf{L}$: $\frac{\partial u}{\partial t} = -\mathbf{L}u$.

The fundamental solution to this equation is the heat kernel $k_t(x, y)$, which quantifies the amount of heat transferred from node $x$ to node $y$ after time $t$. The Heat Kernel Signature is defined as the diagonal of this kernel:
\begin{equation}
    \text{HKS}(x, t) = k_t(x, x) = \sum_{k=0}^{N-1} e^{-\lambda_k t} \phi_k(x)^2,
    \label{eq:hks}
\end{equation}
where $\{(\lambda_k, \mathbf{\phi}_k)\}_{k=0}^{N-1}$ are the eigenvalue-eigenvector pairs of the Laplacian, ordered by increasing eigenvalue.

\paragraph{Geometric Interpretation}
The exponential term $e^{-\lambda_k t}$ acts as a time-dependent low-pass filter in the spectral domain. At small time scales $t$, high-frequency eigenmodes (large $\lambda_k$) are rapidly suppressed, emphasizing local geometric structure. Conversely, large time scales preserve only low-frequency components, capturing global topological properties. This multi-scale behavior makes HKS particularly effective for characterizing graph geometry across different resolutions.

\paragraph{Invariance Properties}
A key advantage of HKS is its invariance to isometric deformations of the underlying space. Since the spectrum of the Laplacian is preserved under isometries, and the signature depends only on squared eigenvector components, HKS is also invariant to sign flips in the eigenbasis.

\paragraph{Practical Implementation}
In practice, we evaluate Equation~\ref{eq:hks} at a logarithmically-spaced sequence of time scales $\{t_1, \ldots, t_M\}$ to construct a feature matrix $\mathbf{H} \in \mathbb{R}^{N \times M}$, where each row $\mathbf{H}_{i,:}$ provides a dense, multi-scale descriptor for node $i$. Following~\citet{sun2009concise}, we use $M=16$ time samples in log scale spacing spanning from 0.1 to 24 with all laplacians.

\subsection{Clustering Performance in the Sparse Regime}
\label{subsec:clustering_sparsity}

\begin{wraptable}[6]{r}{0.45\textwidth}
\centering
\vspace{-3\baselineskip}
\caption{Clustering performance comparison across varying $q/p$ ratios.}
\label{tab:clustering_performance}
\begin{tabular}{cccc}
\hline
$q/p$ & $\mathbf{L^G}$ & $\mathbf{S_\varepsilon}$ & Intra-$\Delta$ \% \\
\hline
0.050 & 0.794 & 0.817 & 97.7 \\
0.100 & 0.735 & 0.762 & 90.7 \\
0.500 & 0.454 & 0.393 & 23.3 \\
0.875 & 0.355 & 0.332 & 7.8 \\
\hline
\end{tabular}
\end{wraptable}
As demonstrated in Table \ref{tab:clustering_performance}, collapsed-operator ($\mathbf{S_\varepsilon}$) clustering outperforms standard spectral clustering based on the graph Laplacian ($\mathbf{L^G}$) in the regime where $q/p \leq 0.10$. During the lifting phase, we observe that higher-order cells in this sparse regime contain discriminative structural information that the standard graph Laplacian fails to capture. While the relative utility of this higher-order signal diminishes as $q/p$ increases, these results highlight that ``filling in'' selected triangles and tetrahedra yields a robust downstream clustering signal when mediated by the collapsed operator $\mathbf{S_\varepsilon}$.

\section{Additional Theoretical Results}
\label{app:theory}

\subsection{Note on Cheeger Inequalities}
\label{app:cheeger}

The asymmetry between upper and lower bounds is a fundamental obstacle in
extending Cheeger inequalities to higher-order structures.
For $k$-dimensional simplicial complexes $X$, one can define a combinatorial expansion constant $h(X)$ generalizing the graph Cheeger constant, and relate it to the smallest non-trivial eigenvalue $\lambda(X)$ of the $(k-1)$-dimensional upper Laplacian.
\citet{gundert2014higher} proved that the upper bound $\lambda(X) \leq h(X)$ extends to this setting, but showed there is \emph{no higher-dimensional analogue of the lower bound}: one can construct families of simplicial complexes that are spectrally expanding (large $\lambda(X)$) but not combinatorially expanding (small $h(X)$).
In other words, spectral and combinatorial notions of expansion decouple in higher dimensions. See also \citet{parzanchevski2016isoperimetric} for related isoperimetric inequalities on simplicial complexes.
Establishing conditions under which lower Cheeger-type bounds hold for operators on higher-order structures (like the collapsed operator $\mathbf{S}$) remains an exciting future research direction.

\subsection{Proof of Positive Semi-definiteness}
\label{app:psd_proof}

In this section, we establish conditions under which the regularized Graded Laplacian $\mathbf{L}^\star$ is positive semi-definite (PSD).

\begin{theorem*}[\textbf{\ref{thm:psd}} (PSD of Graded Laplacian, restated)]

Let $\mathbf{L}^{\star}$ be the Graded Laplacian on $\mathcal{C} = \bigoplus_{k=0}^K C_k$ with adjacency weights $\beta_k \geq 0$ and coupling weights $\gamma_k \geq 0$.

For a matrix $\mathbf{B}$ with singular values $\sigma_1 \geq \cdots \geq \sigma_r > 0 = \sigma_{r+1} = \cdots$, define $\sigma_{\min}^+(\mathbf{B}) := \sigma_r$, with $\sigma_{\min}^+(\mathbf{0}) := +\infty$.

Then $\mathbf{L}^{\star} \succeq 0$ if
\begin{equation}\label{eq:psd}
\gamma_k \leq \beta_{k+1} \cdot \sigma_{\min}^+(\mathbf{B}_{k+1}) \quad \text{for all } k = 0, \ldots, K-1.
\end{equation}
\end{theorem*}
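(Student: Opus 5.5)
The plan is to expand the quadratic form of $\mathbf{L}^\star$ and regroup it into a sum of nonnegative pieces, each involving only two adjacent ranks. For $\mathbf{x}=(\mathbf{x}_0,\ldots,\mathbf{x}_K)$, the block-tridiagonal structure in \cref{eq:graded_laplacian} gives
\begin{equation*}
\mathbf{x}^\top\mathbf{L}^\star\mathbf{x}=\sum_{k=0}^{K}\Big(\beta_k\|\mathbf{B}_k\mathbf{x}_k\|^2+\beta_{k+1}\|\mathbf{B}_{k+1}^\top\mathbf{x}_k\|^2\Big)-2\sum_{k=0}^{K-1}\gamma_k\,\mathbf{x}_k^\top\mathbf{B}_{k+1}\mathbf{x}_{k+1}.
\end{equation*}
The term $\beta_k\|\mathbf{B}_k\mathbf{x}_k\|^2$ arising from $\mathbf{D}_k$ involves the same matrix $\mathbf{B}_k$ as the coupling between ranks $k-1$ and $k$. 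So I reassign it to that pair. Using $\mathbf{B}_0=\mathbf{B}_{K+1}=\mathbf{0}$, so that no boundary terms are left over, this gives $\mathbf{x}^\top\mathbf{L}^\star\mathbf{x}=\sum_{k=0}^{K-1}Q_k(\mathbf{x}_k,\mathbf{x}_{k+1})$ with
\begin{equation*}
Q_k(\mathbf{a},\mathbf{b})=\beta_{k+1}\big(\|\mathbf{B}_{k+1}^\top\mathbf{a}\|^2+\|\mathbf{B}_{k+1}\mathbf{b}\|^2\big)-2\gamma_k\,\mathbf{a}^\top\mathbf{B}_{k+1}\mathbf{b}.
\end{equation*}
It then suffices to show $Q_k\ge 0$ for every $k$ under the stated condition.

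Next I diagonalize each $Q_k$ with a singular value decomposition. Fix $k$, write $\mathbf{B}=\mathbf{B}_{k+1}=\mathbf{U}\bSigma\mathbf{V}^\top$, and set $\tilde{\mathbf{a}}=\mathbf{U}^\top\mathbf{a}$ and $\tilde{\mathbf{b}}=\mathbf{V}^\top\mathbf{b}$. Since $\mathbf{U}$ and $\mathbf{V}$ are orthogonal, the form splits over the positive singular values $\sigma_1,\ldots,\sigma_r$:
\begin{equation*}
Q_k=\sum_{i=1}^{r}\Big(\beta_{k+1}\sigma_i^2(\tilde a_i^2+\tilde b_i^2)-2\gamma_k\sigma_i\tilde a_i\tilde b_i\Big).
\end{equation*}
Coordinates with $\sigma_i=0$ contribute nothing. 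If $\mathbf{B}=\mathbf{0}$, then $Q_k\equiv0$, which is consistent with the convention $\sigma^+_{\min}(\mathbf{0})=+\infty$.

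Each summand is a $2\times2$ quadratic form with matrix $\sigma_i\begin{psmallmatrix}\beta_{k+1}\sigma_i&-\gamma_k\\-\gamma_k&\beta_{k+1}\sigma_i\end{psmallmatrix}$. Equivalently, by $2|\tilde a_i\tilde b_i|\le\tilde a_i^2+\tilde b_i^2$, the summand is at least $\sigma_i(\beta_{k+1}\sigma_i-\gamma_k)(\tilde a_i^2+\tilde b_i^2)$. This is nonnegative exactly when $\gamma_k\le\beta_{k+1}\sigma_i$. Requiring this for all $i\le r$ is the same as $\gamma_k\le\beta_{k+1}\sigma^+_{\min}(\mathbf{B}_{k+1})$, which is the hypothesis \cref{eq:psd}. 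Summing over $i$ and then over $k$ gives $\mathbf{x}^\top\mathbf{L}^\star\mathbf{x}\ge0$.

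I expect the main difficulty to be the bookkeeping in the first step rather than the spectral estimate. One has to verify that every diagonal contribution of $\mathbf{D}_k$ is used exactly once: the down part goes to pair $(k-1,k)$ and the up part to pair $(k,k+1)$. One also has to check that the endpoint ranks $0$ and $K$ leave no unpaired terms. A small related point is the degenerate case $\beta_{k+1}=0$ with $\mathbf{B}_{k+1}\neq\mathbf{0}$: there the hypothesis forces $\gamma_k=0$, and the argument still holds.
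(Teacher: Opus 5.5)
Your proposal is correct and follows the paper's proof essentially step for step: you use the same regrouping of $\mathbf{x}^\top\mathbf{L}^\star\mathbf{x}$ into adjacent-rank forms $Q_k$, the same SVD of $\mathbf{B}_{k+1}$, and the same $2\times 2$ block criterion $\gamma_k\le\beta_{k+1}\sigma_i$, which binds at $\sigma_{\min}^+$. Your explicit handling of the endpoint ranks via $\mathbf{B}_0=\mathbf{B}_{K+1}=\mathbf{0}$, and of the degenerate cases $\mathbf{B}_{k+1}=\mathbf{0}$ and $\beta_{k+1}=0$, is slightly more careful than the paper's write-up.
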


\begin{proof}
    Let $\mathbf{x} = (x_0, x_1, \ldots, x_K) \in \mathcal{C}$, we proceed by examining the conditions where the quadratic form $Q(\mathbf{x}) = \mathbf{x}^\top \mathbf{L} \mathbf{x} \geq 0.$
    Note that $Q(\mathbf{x})$ expands as:
	\begin{equation}
		Q(\mathbf{x}) = \sum_{k=0}^{K} x_k^\top \mathbf{D}_k x_k - 2\sum_{k=0}^{K-1} \gamma_k \langle x_k, \mathbf{B}_{k+1} x_{k+1} \rangle
	\end{equation}
	where $\mathbf{D}_k = \beta_{k} \mathbf{B}_k^\top \mathbf{B}_k + \beta_{k+1} \mathbf{B}_{k+1} \mathbf{B}_{k+1}^\top$ (with boundary terms $\mathbf{D}_0 = \beta_1 \mathbf{B}_1 \mathbf{B}_1^\top$ and $\mathbf{D}_K = \beta_K \mathbf{B}_K^\top \mathbf{B}_K$).

	Substituting and regrouping by incidence operator $\mathbf{B}_{k+1}$:
	\begin{align}
		Q(\mathbf{x}) &= \sum_{k=0}^{K-1} \Big[ \beta_{k+1} \|(\mathbf{B}_{k+1})^\top x_k\|^2 + \beta_{k+1} \|\mathbf{B}_{k+1} x_{k+1}\|^2 - 2\gamma_k \langle x_k, \mathbf{B}_{k+1} x_{k+1} \rangle \Big] \\
		&= \sum_{k=0}^{K-1} Q_k(x_k, x_{k+1})
	\end{align}
	where each $Q_k$ couples only adjacent ranks $(k, k+1)$ through $\mathbf{B}_{k+1}$.

	Now consider the Singular Value Decomposition (SVD) of $\mathbf{B}_{k+1} = U \Sigma V^\top$ with $\Sigma = \mathrm{diag}(\sigma_1, \ldots, \sigma_r, 0, \ldots)$ and $\sigma_1 \geq \cdots \geq \sigma_r > 0$. Define transformed coordinates $\tilde{x}_k = U^\top x_k$ and $\tilde{x}_{k+1} = V^\top x_{k+1}$. Then:
    \begin{align}
    \|\mathbf{B}_{k+1}^\top x_k\|^2 &= \|\Sigma \tilde{x}_k\|^2 = \sum_{i=1}^{r} \sigma_i^2 \tilde{x}_{k,i}^2 \\
    \|\mathbf{B}_{k+1} x_{k+1}\|^2 &= \|\Sigma \tilde{x}_{k+1}\|^2 = \sum_{i=1}^{r} \sigma_i^2 \tilde{x}_{k+1,i}^2 \\
    \langle x_k, \mathbf{B}_{k+1} x_{k+1} \rangle &= \langle \tilde{x}_k, \Sigma \tilde{x}_{k+1} \rangle = \sum_{i=1}^{r} \sigma_i \tilde{x}_{k,i} \tilde{x}_{k+1,i}
    \end{align}

    Substituting into $Q_k$:
    \begin{align}
    Q_k &= \sum_{i=1}^{r} \left[ \beta_{k+1} \sigma_i^2 \tilde{x}_{k,i}^2 - 2\gamma_k \sigma_i \tilde{x}_{k,i} \tilde{x}_{k+1,i} + \beta_{k+1} \sigma_i^2 \tilde{x}_{k+1,i}^2 \right] \\
    &= \sum_{i=1}^{r} \sigma_i \left[ \beta_{k+1} \sigma_i \tilde{x}_{k,i}^2 - 2\gamma_k \tilde{x}_{k,i} \tilde{x}_{k+1,i} + \beta_{k+1} \sigma_i \tilde{x}_{k+1,i}^2 \right] \\
    &= \sum_{i=1}^{r} \sigma_i
    \begin{pmatrix} \tilde{x}_{k,i} \\ \tilde{x}_{k+1,i} \end{pmatrix}^\top
    \begin{pmatrix} \beta_{k+1} \sigma_i & -\gamma_k \\ -\gamma_k & \beta_{k+1} \sigma_i \end{pmatrix}
    \begin{pmatrix} \tilde{x}_{k,i} \\ \tilde{x}_{k+1,i} \end{pmatrix}
    \end{align}

    Each $2 \times 2$ matrix is PSD iff its determinant $\beta_{k+1}^2 \sigma_i^2 - \gamma_k^2 \geq 0$, i.e., $\gamma_k \leq \beta_{k+1} \sigma_i$.
    Since this must hold for all $i = 1, \ldots, r$, the binding constraint is at the smallest positive singular value: $\sigma_{\min}^+ = \sigma_r$.
\end{proof}

\subsection{Approximation Error of the Regularized Collapse}
\label{app:approximation_errors}
\begin{proof}[Proof of \cref{prop:regularization_error}]
Write the vertex/higher-order coupling block as
$\mathbf{X}=[-\gamma_0\mathbf{B}_1,\,\mathbf{0},\,\dots]$.
Because $\mathbf{L}^{\star}\succeq0$, its principal submatrix $\mathbf{C}$ is positive semidefinite.
Let $\mathbf{v}\in\ker(\mathbf{C})$ and write $\mathbf{v}=(\mathbf{v}_1,\mathbf{v}_2,\dots)$ according to the higher-order ranks.
The quadratic form of $\mathbf{C}$ is the quadratic form of $\mathbf{L}^{\star}$ with the vertex signal set to zero.
Using the same nonnegative pairwise decomposition as in the proof of \cref{thm:psd},
\[
\mathbf{v}^{\top}\mathbf{C}\mathbf{v}
= \beta_1\|\mathbf{B}_1\mathbf{v}_1\|^2
  + \sum_{k=1}^{K-1} Q_k(\mathbf{v}_k,\mathbf{v}_{k+1}),
\]
where each $Q_k$ is positive semidefinite under the PSD condition.
Since $\mathbf{v}\in\ker(\mathbf{C})$, the left-hand side is zero; hence $\beta_1\|\mathbf{B}_1\mathbf{v}_1\|^2=0$.
If $\beta_1>0$, this gives $\mathbf{B}_1\mathbf{v}_1=0$.
If $\beta_1=0$, the PSD condition gives $\gamma_0=0$.
Thus
\[
\mathbf{X}\mathbf{v}
= -\gamma_0\mathbf{B}_1\mathbf{v}_1
=0,
\]
which proves $\ker(\mathbf{C})\subseteq\ker(\mathbf{X})$.
Let
\[
\mathbf{C}=\sum_{i=1}^{r}\lambda_i\mathbf{u}_i\mathbf{u}_i^{\top},
\qquad
P_{\ker}=\sum_{i=r+1}^{N}\mathbf{u}_i\mathbf{u}_i^{\top},
\]
where $\lambda_i>0$ are the positive eigenvalues of $\mathbf{C}$.
Then
\[
(\mathbf{C}+\epsilon\mathbf{I})^{-1}
= \sum_{i=1}^{r}\frac{1}{\lambda_i+\epsilon}\mathbf{u}_i\mathbf{u}_i^{\top}
  + \frac{1}{\epsilon}P_{\ker}.
\]
Since $\ker(\mathbf{C})\subseteq\ker(\mathbf{X})$, we have $\mathbf{X}P_{\ker}=0$.
The apparent $\frac{1}{\epsilon}$ kernel term therefore drops out after multiplying by the coupling blocks.

The pseudo-inverse is
\[
\mathbf{C}^{\dagger}
=\sum_{i=1}^{r}\frac{1}{\lambda_i}\mathbf{u}_i\mathbf{u}_i^{\top}.
\]
If $r=0$, then $\mathbf{C}=0$ and the kernel result implies $\mathbf{X}=0$, so $\mathbf{S}_{\epsilon}=\mathbf{S}^{\dagger}=\mathbf{A}$.
Otherwise,
\[
\mathbf{S}_{\epsilon}-\mathbf{S}^{\dagger}
=\mathbf{X}
\left(
\mathbf{C}^{\dagger}-(\mathbf{C}+\epsilon\mathbf{I})^{-1}
\right)
\mathbf{X}^{\top}
=
\mathbf{X}
\left(
\sum_{i=1}^{r}\frac{\epsilon}{\lambda_i(\lambda_i+\epsilon)}
\mathbf{u}_i\mathbf{u}_i^{\top}
\right)
\mathbf{X}^{\top}.
\]
Taking operator norms gives
\[
\|\mathbf{S}_{\epsilon}-\mathbf{S}^{\dagger}\|_2
\leq
\|\mathbf{X}\|_2^2
\max_{1\leq i\leq r}
\frac{\epsilon}{\lambda_i(\lambda_i+\epsilon)}
\leq
\frac{\epsilon\|\mathbf{X}\|_2^2}
{\lambda_{+}(\lambda_{+}+\epsilon)},
\]
with $\lambda_{+}=\lambda_{\min}^+(\mathbf{C})$.
This proves the claimed convergence and the stated error bound.
\end{proof}

\subsection{Block Diagonalization of the Graded Laplacian}
\label{app:block_diag}

In this section, we formally prove that the Graded Laplacian $\mathbf{L}^*$ decomposes into independent blocks corresponding to disconnected components.
For CW complexes, the operators are the cellular boundary matrices.
For the incidence-based construction, the same statement applies to finite graded posets equipped with adjacent-rank incidence matrices.

\begin{theorem}
    Let $\mathcal{C}$ be either a finite CW complex with cellular boundary matrices or a finite graded poset with adjacent-rank incidence matrices.
    Suppose $\mathcal{C}$ consists of $m$ disjoint connected components denoted by $\mathcal{C}_1, \mathcal{C}_2, \dots, \mathcal{C}_m$.
    Let $\mathbf{L}^* \in \mathbb{R}^{N \times N}$ be the Graded Laplacian of $\mathcal{C}$, where indices are initially ordered by rank.

    There exists a permutation matrix $\mathbf{P} \in \{0,1\}^{N \times N}$ such that the permuted Laplacian is block diagonal:
    \begin{equation}
        \mathbf{P} \mathbf{L}^* \mathbf{P}^\top =
        \begin{pmatrix}
            \mathbf{L}^*_{\mathcal{C}_1} & \mathbf{0} & \dots & \mathbf{0} \\
            \mathbf{0} & \mathbf{L}^*_{\mathcal{C}_2} & \dots & \mathbf{0} \\
            \vdots & \vdots & \ddots & \vdots \\
            \mathbf{0} & \mathbf{0} & \dots & \mathbf{L}^*_{\mathcal{C}_m}
        \end{pmatrix}
    \end{equation}
    where $\mathbf{L}^*_{\mathcal{C}_i}$ is the Graded Laplacian restricted to the $i$-th connected component.
\end{theorem}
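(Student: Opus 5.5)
The plan is to show that every nonzero block of $\mathbf{L}^*$ couples only cells lying in the same connected component; a rank-respecting permutation that groups cells by component then makes $\mathbf{L}^*$ block diagonal. First I will fix what connectivity means. I take the \emph{incidence graph} (Hasse diagram) $\mathcal{H}$ whose nodes are all cells of all ranks, with an edge between a $k$-cell $\sigma$ and a $(k{+}1)$-cell $\tau$ whenever $[\mathbf{B}_{k+1}]_{\sigma\tau}\neq 0$. For CW complexes $[\mathbf{B}_{k+1}]_{\sigma\tau}$ is the cellular incidence degree; for graded posets it is the cover-relation indicator. The components $\mathcal{C}_1,\dots,\mathcal{C}_m$ are the connected components of $\mathcal{H}$. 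For a CW complex each path component of the space is a union of such components, so I will state the result for this finer notion, or note that it matches the topological one whenever the attaching maps carry nonzero degree.

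The key lemma is that every nonzero entry of $\mathbf{L}^*$ sits at a pair of cells in the same component of $\mathcal{H}$. The off-diagonal blocks $-\gamma_k\mathbf{B}_{k+1}$ are nonzero only at edges of $\mathcal{H}$, so they are immediate. For a diagonal block $\mathbf{D}_k$, an entry $(\mathbf{B}_k^\top\mathbf{B}_k)_{\sigma\sigma'}=\sum_\rho [\mathbf{B}_k]_{\rho\sigma}[\mathbf{B}_k]_{\rho\sigma'}$ is nonzero only if some $(k{-}1)$-cell $\rho$ is adjacent in $\mathcal{H}$ to both $\sigma$ and $\sigma'$, giving a path of length two. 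The same argument through a common $(k{+}1)$-coface handles $\mathbf{B}_{k+1}\mathbf{B}_{k+1}^\top$. Entries where $\sigma=\sigma'$ are trivially within one component.

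Next I construct $\mathbf{P}$ by listing the cells of $\mathcal{C}_1$ first, then those of $\mathcal{C}_2$, and so on, keeping the rank ordering inside each group. By the lemma, $(\mathbf{P}\mathbf{L}^*\mathbf{P}^\top)_{pq}=0$ whenever $p$ and $q$ lie in different components, which gives the block structure. It remains to identify each diagonal block with $\mathbf{L}^*_{\mathcal{C}_i}$. The boundary matrices of $\mathcal{C}_i$ are the submatrices of $\mathbf{B}_k$ indexed by the cells of $\mathcal{C}_i$. Because a cell in $\mathcal{C}_i$ has no incidences outside $\mathcal{C}_i$, the restricted products satisfy $(\mathbf{B}_k^\top\mathbf{B}_k)|_{\mathcal{C}_i}=(\mathbf{B}_k|_{\mathcal{C}_i})^\top(\mathbf{B}_k|_{\mathcal{C}_i})$: the sum over $\rho$ picks up no terms from other components. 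The same identity holds for the other products, and the weights $\beta_k,\gamma_k$ are global, so each diagonal block equals the Graded Laplacian built from the restricted boundary matrices.

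The only real subtlety is the definition of ``connected component'' for CW complexes, namely reconciling topological path-components with incidence-graph components when boundary coefficients cancel. I would handle this by adopting the incidence-graph definition explicitly, which is the one the algebra sees. The rest is routine bookkeeping.
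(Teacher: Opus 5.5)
Your proposal is correct and rests on the same observation as the paper's proof: no nonzero incidence joins different components, so a component-grouping permutation block-diagonalizes $\mathbf{L}^*$. The paper wraps this in an induction on $m$ that peels off one component at a time, whereas you construct the permutation directly. You also explicitly check two things the paper covers with a single ``consequently'': that the second-order terms $\mathbf{B}_k^\top\mathbf{B}_k$ and $\mathbf{B}_{k+1}\mathbf{B}_{k+1}^\top$ vanish across components, and that each diagonal block is the Graded Laplacian built from the restricted boundary matrices. Restricting to incidence-graph components is harmless but unnecessary: a nonzero cellular incidence forces both cells into the same path component, so your argument applies verbatim to topological components, or equivalently you can merge your finer blocks.
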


\begin{proof}
    We proceed by structural induction on the number of connected components $m$.

    \textbf{Base Case ($m=1$):}
    Consider $\mathcal{C}$ with exactly one connected component $\mathcal{C}_1$. In this case, the Graded Laplacian $\mathbf{L}^*$ is already the operator for the single component $\mathcal{C}_1$. The required permutation matrix is simply the identity matrix $\mathbf{P} = \mathbf{I}_N$. The theorem holds trivially.

    \textbf{Inductive Step:}
    Assume the theorem holds for any such structure with $k$ connected components. We show it holds for $\mathcal{C}$ with $k+1$ connected components, denoted $\mathcal{C}_1, \dots, \mathcal{C}_{k+1}$.

    We can partition $\mathcal{C}$ into two disjoint sets: the first component $\mathcal{C}_1$ and the union of the remaining components $\mathcal{C}' = \bigcup_{j=2}^{k+1} \mathcal{C}_j$.

    Let $I$ be the set of all cell indices in $\mathcal{C}$. We partition $I$ into $I_1$ (indices of cells in $\mathcal{C}_1$) and $I'$ (indices of cells in $\mathcal{C}'$). By definition of connected components, there are no boundary or incidence relations between $\mathcal{C}_1$ and $\mathcal{C}'$.
    Consequently, any entry in $\mathbf{L}^*$ correlating an index from $I_1$ with an index from $I'$ is zero.

    Let $\mathbf{P}_1$ be a permutation matrix that reorders the indices such that all $i \in I_1$ appear before all $j \in I'$. Applying this similarity transformation yields:
    \begin{equation}
        \mathbf{P}_1 \mathbf{L}^* \mathbf{P}_1^\top =
        \begin{pmatrix}
            \mathbf{L}^*_{\mathcal{C}_1} & \mathbf{0} \\
            \mathbf{0} & \mathbf{L}^*_{\mathcal{C}'}
        \end{pmatrix}
    \end{equation}
    Here, $\mathbf{L}^*_{\mathcal{C}_1}$ is the Graded Laplacian of the first component, and $\mathbf{L}^*_{\mathcal{C}'}$ is the Graded Laplacian of the remaining sub-complex.

    Since $\mathcal{C}'$ consists of exactly $k$ connected components ($\mathcal{C}_2, \dots, \mathcal{C}_{k+1}$), we invoke the inductive hypothesis. There exists a permutation $\mathbf{P}'$ such that:
    \begin{equation}
        \mathbf{P}' \mathbf{L}^*_{\mathcal{C}'} (\mathbf{P}')^\top = \text{diag}(\mathbf{L}^*_{\mathcal{C}_2}, \dots, \mathbf{L}^*_{\mathcal{C}_{k+1}})
    \end{equation}
    We construct the full permutation $\mathbf{P}$ by combining the identity for the first block with $\mathbf{P}'$ for the second block:
    \begin{equation}
        \mathbf{P} = \begin{pmatrix} \mathbf{I}_{|I_1|} & \mathbf{0} \\ \mathbf{0} & \mathbf{P}' \end{pmatrix} \mathbf{P}_1
    \end{equation}
    Applying this to the original matrix yields the full block diagonal structure:
    \begin{equation}
        \mathbf{P} \mathbf{L}^* \mathbf{P}^\top = \text{diag}(\mathbf{L}^*_{\mathcal{C}_1}, \mathbf{L}^*_{\mathcal{C}_2}, \dots, \mathbf{L}^*_{\mathcal{C}_{k+1}})
    \end{equation}
    Thus, by the principle of induction, the theorem holds for all $m \geq 1$.
\end{proof}

\section{Extended Related Work}
\label{app:related}

\paragraph{Schur Complement Methods} The Schur complement plays a central role in efficient graph algorithms. It enables graph sparsification while preserving spectral properties \citep{durfee2017sampling}, facilitates effective resistance estimation \citep{dorfler2010synchronization}, and provides the theoretical foundation for approximate Gaussian elimination on graph Laplacians \citep{kyng2016approximate}. A common theme across these applications is the elimination of a spatial subset of vertices while maintaining key spectral quantities, which parallels our approach to boundary operators. In our work we don't sparsify the vertex set but capture the higher order dynamics and collapsing them onto the vertex space.

\paragraph{Graph Kernels and Gaussian Processes} Laplacian based kernel methods on graphs have been extensively studied, particularly for modeling diffusion processes. Heat kernels \citep{bai2004heat} and Matérn kernels \citep{borovitskiy2021matern} provide principled approaches to defining similarity on graph-structured domains. \citep{shuman2013emerging} provides theoretical foundations for signal processing in the graph domain.

\paragraph{Higher-Order Message Passing} Recent work has extended message passing beyond pairwise interactions. Higher-order message passing (HOMP) frameworks leverage simplicial complexes to capture multi-way relationships \citep{battiloro2024n, roddenberry2021principled,hajij2022topological}. Related efforts inject topological priors directly into message-passing neural networks \citep{horn2021topological, chen2021topological}, extending their expressiveness. Recently, \citet{hajij2026copresheaf} proposed a (co-pre)sheaf-theoretic message passing framework, which generalizes most of the known HOMP schemes. Leveraging higher-order information has demonstrated effectiveness in molecular learning tasks~\cite{barsbey2025higher,wang2025topotein,huang2026hogdiff} as well as in multimodal AI~\cite{danjun2026topoor}.

\section{Computational Scalability and the Implicit Operator $\mathbf{S_{\epsilon}}$}
\label{app:appendix_scalability}

In applications such as spectral clustering, the dense collapsed operator $\mathbf{S_{\epsilon}}$ does not need to be constructed explicitly. Instead, we employ a Krylov subspace approach that requires only matrix-vector multiplications of the form $\mathbf{v}_{\text{next}} = \mathbf{S_{\epsilon} v}$. Specifically, this is evaluated as:
\begin{equation}
\mathbf{S_{\epsilon}} \mathbf{v} = \mathbf{A v - X(C + \epsilon I)^{-1}X^\top v}
\label{eq:implicit_operator}
\end{equation}
Evaluating this product avoids dense matrix inversion and only requires solving the linear system $\mathbf{(C + \epsilon I) z = y}$ via the Conjugate Gradient (CG) method.

A single matrix-vector multiplication with the sparse block $C$ takes $\mathcal{O}(M)$ operations, where $M$ is the total number of non-zero entries (incidences) in the higher-order complex. Therefore, the theoretical runtime scales linearly as $\mathcal{O}(N_{CG} \cdot M)$, where $N_{CG}$ is the number of CG iterations.

To empirically validate this scaling behavior, we conducted an experimental analysis measuring runtime as higher-order topological density increases. As shown in Table \ref{tab:runtime_density}, the runtime of our implicit sparse CG approach scales predictably with the number of higher-order structures.

It is worth noting that the Conjugate Gradient method is inherently sensitive to matrix conditioning. Consequently, higher-order incidences may require more iterations to converge if they worsen the operator's conditioning. Furthermore, increased density affects the cost per iteration through more expensive matrix-vector products.

Nevertheless, when the regularized system retains a favorable sparse structure, CG remains practical and highly preferable to direct solvers for large-scale problems. Table \ref{tab:runtime_sparse} demonstrates this advantage by comparing the runtime of our implicit CG method against a standard sparse direct factorization solver across increasingly large complexes.

As highlighted in Table \ref{tab:runtime_sparse}, the implicit CG formulation rapidly overtakes the direct solver as the problem size scales. At $V=2000$, our implementation completes the operation more than $4\times$ faster than the direct sparse solver (75.51s versus 323.14s). Furthermore, the memory efficiency of the Krylov subspace approach allows it to successfully process graphs at $V=3000$, whereas the direct solver fails due to Out-Of-Memory (OOM) errors.

\begin{table}[ht]
\centering
\caption{Runtime scaling with increasing higher-order topological density for a fixed vertex count ($V=400$).}
\label{tab:runtime_density}
\begin{tabular}{ccccc}
\hline
Vertices & Edges & Triangles & Tetrahedra & Implicit Sparse CG (s) \\
\hline
400 & 696 & 18 & 0 & 0.50 \\
400 & 1650 & 296 & 4 & 12.43 \\
400 & 2081 & 625 & 12 & 36.32 \\
400 & 2702 & 1416 & 71 & 95.73 \\
400 & 4159 & 5504 & 1148 & 223.93 \\
\hline
\end{tabular}
\vspace{-4.0mm}
\end{table}
\begin{table}[ht]
\centering
\caption{Runtime scaling comparison in the sparse regime.}
\label{tab:runtime_sparse}
\begin{tabular}{cccccc}
\hline
Vertices & Edges & Triangles & Tetrahedra & Sparse Direct Fact. (s) & Implicit Sparse CG [Ours] (s) \\
\hline
200 & 742 & 240 & 9 & 0.08 & 5.41 \\
800 & 4434 & 741 & 2 & 4.58 & 20.87 \\
1200 & 7311 & 939 & 7 & 33.59 & 43.96 \\
2000 & 13840 & 1180 & 1 & 323.14 & 75.51 \\
3000 & 23297 & 1571 & 1 & OOM & 404.77 \\
\hline
\end{tabular}
\vspace{-4.0mm}
\end{table}

%% file: main.bib
@String(NIPS= {Adv. Neural Inform. Process. Syst.})

@String(ICLR = {Int. Conf. Learn. Represent.})

@String(ICML = {Int. Conf. Mach. Learn.})

@String(IJCAI = {IJCAI})

@String(AAAI = {AAAI})

@String(NIPS  = {NeurIPS})

@String(ICLR  = {ICLR})

@String(ICML  = {ICML})

@inproceedings{sun2009concise,
  title={A concise and provably informative multi-scale signature based on heat diffusion},
  author={Sun, Jian and Ovsjanikov, Maks and Guibas, Leonidas},
  booktitle={Computer graphics forum},
  volume={28},
  pages={1383--1392},
  year={2009},
  organization={Wiley Online Library}
}

@article{bodnar2021weisfeiler,
  title={Weisfeiler and lehman go cellular: Cw networks},
  author={Bodnar, Cristian and Frasca, Fabrizio and Otter, Nina and Wang, Yuguang and Lio, Pietro and Montufar, Guido F and Bronstein, Michael},
  journal=NIPS,
  volume={34},
  pages={2625--2640},
  year={2021}
}

@inproceedings{bodnar2021weisfeiler2,
  title={Weisfeiler and lehman go topological: Message passing simplicial networks},
  author={Bodnar, Cristian and Frasca, Fabrizio and Wang, Yuguang and Otter, Nina and Montufar, Guido F and Lio, Pietro and Bronstein, Michael},
  booktitle=ICML,
  pages={1026--1037},
  year={2021},
  organization={PMLR}
}

@article{hajij2022topological,
  title={Topological deep learning: Going beyond graph data},
  author={Hajij, Mustafa and Zamzmi, Ghada and Papamarkou, Theodore and Miolane, Nina and Guzm{\'a}n-S{\'a}enz, Aldo and Ramamurthy, Karthikeyan Natesan and Birdal, Tolga and Dey, Tamal K and Mukherjee, Soham and Samaga, Shreyas N and others},
  journal={arXiv preprint arXiv:2206.00606},
  year={2022}
}

@inproceedings{huang2024higher,
  title={Higher-order graph convolutional network with flower-petals laplacians on simplicial complexes},
  author={Huang, Yiming and Zeng, Yujie and Wu, Qiang and L{\"u}, Linyuan},
  booktitle={Proceedings of the AAAI conference on artificial intelligence},
  volume={38},
  pages={12653--12661},
  year={2024}
}

@book{boissonat_geometrical_2018,
  title={Geometric and topological inference},
  author={Boissonnat, Jean-Daniel and Chazal, Fr{\'e}d{\'e}ric and Yvinec, Mariette},
  volume={57},
  year={2018},
  publisher={Cambridge University Press}
}

@article{chazal2021introduction,
  title={An introduction to topological data analysis: fundamental and practical aspects for data scientists},
  author={Chazal, Fr{\'e}d{\'e}ric and Michel, Bertrand},
  journal={Frontiers in artificial intelligence},
  volume={4},
  pages={108},
  year={2021},
  publisher={Frontiers}
}

@article{zomorodian2012topological,
  title={Topological data analysis},
  author={Zomorodian, Afra},
  journal={Advances in applied and computational topology},
  volume={70},
  pages={1--39},
  year={2012},
  publisher={American Mathematical Society Providence}
}

@inproceedings{barsbey2025higher,
  title={Higher-Order Molecular Learning: The Cellular Transformer},
  author={Barsbey, Melih and Ballester, Rub{\'e}n and Demir, Andac and Casacuberta, Carles and Hern{\'a}ndez-Garc{\'\i}a, Pablo and Pujol-Perich, David and Yurtseven, Sarper and Escalera, Sergio and Battiloro, Claudio and Hajij, Mustafa and others},
  booktitle={ICLR 2025 Workshop on Generative and Experimental Perspectives for Biomolecular Design},
  year={2025}
}

@inproceedings{hajij2023combinatorial,
  title={Combinatorial complexes: bridging the gap between cell complexes and hypergraphs},
  author={Hajij, Mustafa and Zamzmi, Ghada and Papamarkou, Theodore and Guzman-Saenz, AIdo and Birdal, ToIga and Schaub, Michael T},
  booktitle={2023 57th Asilomar Conference on Signals, Systems, and Computers},
  pages={799--803},
  year={2023},
  organization={IEEE}
}

@article{papamarkou2024position,
  title={Position: Topological deep learning is the new frontier for relational learning},
  author={Papamarkou, Theodore and Birdal, Tolga and Bronstein, Michael and Carlsson, Gunnar and Curry, Justin and Gao, Yue and Hajij, Mustafa and Kwitt, Roland and Lio, Pietro and Di Lorenzo, Paolo and others},
  journal={Proceedings of machine learning research},
  volume={235},
  pages={39529},
  year={2024}
}

@inproceedings{horn2021topological,
  title={Topological Graph Neural Networks},
  author={Horn, Max and De Brouwer, Edward and Moor, Michael and Moreau, Yves and Rieck, Bastian and Borgwardt, Karsten},
  booktitle=ICLR,
  year={2022},
}

@article{chen2021topological,
  title={Topological relational learning on graphs},
  author={Chen, Yuzhou and Coskunuzer, Baris and Gel, Yulia},
  journal={Advances in neural information processing systems},
  volume={34},
  pages={27029--27042},
  year={2021}
}

@inproceedings{donnat2018learning,
  title={Learning structural node embeddings via diffusion wavelets},
  author={Donnat, Claire and Zitnik, Marinka and Hallac, David and Leskovec, Jure},
  booktitle={Proceedings of the 24th ACM SIGKDD international conference on knowledge discovery \& data mining},
  pages={1320--1329},
  year={2018}
}

@book{chung1997spectral,
  title={Spectral graph theory},
  author={Chung, Fan RK},
  volume={92},
  year={1997},
  publisher={American Mathematical Soc.}
}

@inproceedings{
kipf2016semi,
title={Semi-Supervised Classification with Graph Convolutional Networks},
author={Thomas N. Kipf and Max Welling},
booktitle={International Conference on Learning Representations},
year={2017},
}

@article{telyatnikov2025topobench,
  title={TopoBench: A Framework for Benchmarking Topological Deep Learning},
  author={Lev Telyatnikov and others},
  journal={Journal of Data-centric Machine Learning Research},
  year={2025},
  note={}
}

@inproceedings{
battiloro2024n,
title={E(n) Equivariant Topological Neural Networks},
author={Claudio Battiloro and Ege Karaismailoglu and Mauricio Tec and George Dasoulas and Michelle Audirac and Francesca Dominici},
booktitle={The Thirteenth International Conference on Learning Representations},
year={2025},
}

@inproceedings{roddenberry2021principled,
  title={Principled simplicial neural networks for trajectory prediction},
  author={Roddenberry, T Mitchell and Glaze, Nicholas and Segarra, Santiago},
  booktitle=ICML,
  pages={9020--9029},
  year={2021},
  organization={PMLR}
}

@inproceedings{papillon2024topotune,
  title={TopoTune: A Framework for Generalized Combinatorial Complex Neural Networks},
  author={Papillon, Mathilde and Bernardez, Guillermo and Battiloro, Claudio and Miolane, Nina},
  booktitle={Forty-second International Conference on Machine Learning},
  year={2025}
}

@inproceedings{ballester2024mantra,
  title={{MANTRA}: {T}he {M}anifold {T}riangulations {A}ssemblage},
  author={Ballester, Rub{\'e}n and R{\"o}ell, Ernst and Schmid, Daniel B{\=\i}n and Alain, Mathieu and Escalera, Sergio and Casacuberta, Carles and Rieck, Bastian},
  booktitle=ICLR,
  year={2025},
}

@inproceedings{velivckovic2017graph,
  title={Graph Attention Networks},
  author={Veli{\v{c}}kovi{\'c}, Petar and Cucurull, Guillem and Casanova, Arantxa and Romero, Adriana and Li{\`o}, Pietro and Bengio, Yoshua},
  booktitle=ICLR,
  year={2018},
}

@article{zhou2006learning,
  title={Learning with hypergraphs: Clustering, classification, and embedding},
  author={Zhou, Dengyong and Huang, Jiayuan and Sch{\"o}lkopf, Bernhard},
  journal={Advances in neural information processing systems},
  volume={19},
  year={2006}
}

@inproceedings{wardetzky2007discrete,
  title={Discrete Laplace operators: no free lunch},
  author={Wardetzky, Max and Mathur, Saurabh and K{\"a}lberer, Felix and Grinspun, Eitan},
  booktitle={Symposium on Geometry processing},
  volume={33},
  pages={37},
  year={2007},
  organization={Aire-la-Ville, Switzerland}
}

@article{schaub2020random,
  title={Random walks on simplicial complexes and the normalized Hodge 1-Laplacian},
  author={Schaub, Michael T and Benson, Austin R and Horn, Paul and Lippner, Gabor and Jadbabaie, Ali},
  journal={SIAM Review},
  volume={62},
  number={2},
  pages={353--391},
  year={2020},
  publisher={SIAM}
}

@inproceedings{shi2020masked,
  title={Masked Label Prediction: Unified Message Passing Model for Semi-Supervised Classification},
  author={Shi, Yunsheng and Huang, Zhengjie and Feng, Shikun and Zhong, Hui and Wang, Wenjing and Sun, Yu},
  booktitle=IJCAI,
  pages={1548--1554},
  year={2021},
  doi={10.24963/ijcai.2021/214}
}

@article{du2017topology,
  title={Topology adaptive graph convolutional networks},
  author={Du, Jian and Zhang, Shanghang and Wu, Guanhang and Moura, Jos{\'e} MF and Kar, Soummya},
  journal={arXiv preprint arXiv:1710.10370},
  year={2017}
}

@inproceedings{goh2022simplicial,
  title={Simplicial Attention Networks},
  author={Goh, Christopher Wei Jin and Bodnar, Cristian and Lio, Pietro},
  booktitle={ICLR 2022 Workshop on Geometrical and Topological Representation Learning},
  year={2022}
}

@article{ballester2024attending,
  title={Attending to topological spaces: The cellular transformer},
  author={Ballester, Rub{\'e}n and Hern{\'a}ndez-Garc{\'\i}a, Pablo and Papillon, Mathilde and Battiloro, Claudio and Miolane, Nina and Birdal, Tolga and Casacuberta, Carles and Escalera, Sergio and Hajij, Mustafa},
  journal={arXiv preprint arXiv:2405.14094},
  year={2024}
}

@inproceedings{roell2023differentiable,
  title={Differentiable Euler Characteristic Transforms for Shape Classification},
  author={Roell, Ernst and Rieck, Bastian},
  booktitle=ICLR,
  year={2024},
}

@inproceedings{yang2022efficient,
  title={Efficient representation learning for higher-order data with simplicial complexes},
  author={Yang, Ruochen and Sala, Frederic and Bogdan, Paul},
  booktitle={Learning on Graphs Conference},
  pages={13--1},
  year={2022},
  organization={PMLR}
}

@article{yang2023convolutional,
  title={Convolutional learning on simplicial complexes},
  author={Yang, Maosheng and Isufi, Elvin},
  journal={arXiv preprint arXiv:2301.11163},
  year={2023}
}

@article{wu2023simplicial,
  title={Simplicial complex neural networks},
  author={Wu, Hanrui and Yip, Andy and Long, Jinyi and Zhang, Jia and Ng, Michael K},
  journal={IEEE Transactions on Pattern Analysis and Machine Intelligence},
  volume={46},
  number={1},
  pages={561--575},
  year={2023},
  publisher={IEEE}
}

@inproceedings{smola2003kernels,
  title={Kernels and regularization on graphs},
  author={Smola, Alexander J and Kondor, Risi},
  booktitle={Learning theory and kernel machines: 16th annual conference on learning theory and 7th kernel workshop, COLT/kernel 2003, Washington, DC, USA, august 24-27, 2003. Proceedings},
  pages={144--158},
  year={2003},
  organization={Springer}
}

@article{shuman2013emerging,
  title={The emerging field of signal processing on graphs: Extending high-dimensional data analysis to networks and other irregular domains},
  author={Shuman, David I and Narang, Sunil K and Frossard, Pascal and Ortega, Antonio and Vandergheynst, Pierre},
  journal={IEEE signal processing magazine},
  volume={30},
  number={3},
  pages={83--98},
  year={2013},
  publisher={IEEE}
}

@inproceedings{bai2004heat,
  title={Heat kernels, manifolds and graph embedding},
  author={Bai, Xiao and Hancock, Edwin R},
  booktitle={Joint IAPR International Workshops on Statistical Techniques in Pattern Recognition (SPR) and Structural and Syntactic Pattern Recognition (SSPR)},
  pages={198--206},
  year={2004},
  organization={Springer}
}

@inproceedings{borovitskiy2021matern,
  title={Mat{\'e}rn Gaussian processes on graphs},
  author={Borovitskiy, Viacheslav and Azangulov, Iskander and Terenin, Alexander and Mostowsky, Peter and Deisenroth, Marc and Durrande, Nicolas},
  booktitle={International Conference on Artificial Intelligence and Statistics},
  pages={2593--2601},
  year={2021},
  organization={PMLR}
}

@book{boyd2004convex,
  title={Convex optimization},
  author={Boyd, Stephen and Vandenberghe, Lieven},
  year={2004},
  publisher={Cambridge university press}
}

@article{lucas2020multi,
  title = {Multiorder Laplacian for synchronization in higher-order networks},
  author = {Lucas, Maxime and Cencetti, Giulia and Battiston, Federico},
  journal = {Phys. Rev. Res.},
  volume = {2},
  issue = {3},
  pages = {033410},
  numpages = {14},
  year = {2020},
  month = {Sep},
  publisher = {American Physical Society},
  doi = {10.1103/PhysRevResearch.2.033410},
}

@article{eckmann1944harmonische,
  title={Harmonische funktionen und randwertaufgaben in einem komplex},
  author={Eckmann, Beno},
  journal={Commentarii Mathematici Helvetici},
  volume={17},
  number={1},
  pages={240--255},
  year={1944},
  publisher={Springer}
}

@article{barbarossa2020topological,
  title={Topological signal processing over simplicial complexes},
  author={Barbarossa, Sergio and Sardellitti, Stefania},
  journal={IEEE Transactions on Signal Processing},
  volume={68},
  pages={2992--3007},
  year={2020},
  publisher={IEEE}
}

@article{schaub2021signal,
  title={Signal processing on higher-order networks: Livin’on the edge... and beyond},
  author={Schaub, Michael T and Zhu, Yu and Seby, Jean-Baptiste and Roddenberry, T Mitchell and Segarra, Santiago},
  journal={Signal Processing},
  volume={187},
  pages={108149},
  year={2021},
  publisher={Elsevier}
}

@article{dorfler2010synchronization,
  title={Synchronization of power networks: Network reduction and effective resistance},
  author={Dorfler, Florian and Bullo, Francesco},
  journal={IFAC Proceedings Volumes},
  volume={43},
  number={19},
  pages={197--202},
  year={2010},
  publisher={Elsevier}
}

@inproceedings{kyng2016approximate,
  title={Approximate gaussian elimination for laplacians-fast, sparse, and simple},
  author={Kyng, Rasmus and Sachdeva, Sushant},
  booktitle={2016 IEEE 57th Annual Symposium on Foundations of Computer Science (FOCS)},
  pages={573--582},
  year={2016},
  organization={IEEE}
}

@inproceedings{durfee2017sampling,
  title={Sampling random spanning trees faster than matrix multiplication},
  author={Durfee, David and Kyng, Rasmus and Peebles, John and Rao, Anup B and Sachdeva, Sushant},
  booktitle={Proceedings of the 49th Annual ACM SIGACT Symposium on Theory of Computing},
  pages={730--742},
  year={2017}
}

@article{battiston2020networks,
  title={Networks beyond pairwise interactions: Structure and dynamics},
  author={Battiston, Federico and Cencetti, Giulia and Iacopini, Iacopo and Latora, Vito and Lucas, Maxime and Patania, Alice and Young, Jean-Gabriel and Petri, Giovanni},
  journal={Physics reports},
  volume={874},
  pages={1--92},
  year={2020},
  publisher={Elsevier}
}

@article{benson2016higher,
  title={Higher-order organization of complex networks},
  author={Benson, Austin R and Gleich, David F and Leskovec, Jure},
  journal={Science},
  volume={353},
  number={6295},
  pages={163--166},
  year={2016},
  publisher={American Association for the Advancement of Science}
}

@article{lim2020hodge,
  title={Hodge Laplacians on graphs},
  author={Lim, Lek-Heng},
  journal={Siam Review},
  volume={62},
  number={3},
  pages={685--715},
  year={2020},
  publisher={SIAM}
}

@article{bell1974nonlinear,
  title={Nonlinear renormalization groups},
  author={Bell, Thomas L and Wilson, Kenneth G},
  journal={Physical Review B},
  volume={10},
  number={9},
  pages={3935},
  year={1974},
  publisher={APS}
}

@article{feshbach1958unified,
  title={Unified theory of nuclear reactions},
  author={Feshbach, Herman},
  journal={Annals of Physics},
  volume={5},
  number={4},
  pages={357--390},
  year={1958},
  publisher={Elsevier}
}

@article{babic2002resistance,
  title={Resistance-distance matrix: A computational algorithm and its application},
  author={Babi{\'c}, Darko and Klein, Douglas J and Lukovits, Istv{\'a}n and Nikoli{\'c}, Sonja and Trinajsti{\'c}, Nenad},
  journal={International Journal of Quantum Chemistry},
  volume={90},
  number={1},
  pages={166--176},
  year={2002},
  publisher={Wiley Online Library}
}

@book{lauritzen1996graphical,
  title={Graphical models},
  author={Lauritzen, Steffen L},
  volume={17},
  year={1996},
  publisher={Clarendon press}
}

@article{devriendt2022effective,
  title={Effective resistance is more than distance: Laplacians, simplices and the Schur complement},
  author={Devriendt, Karel},
  journal={Linear Algebra and its Applications},
  volume={639},
  pages={24--49},
  year={2022},
  publisher={Elsevier}
}

@book{hatcher2002algebraic,
  title={Algebraic Topology},
  author={Hatcher, Allen},
  year={2002},
  publisher={Cambridge University Press},
  address={Cambridge},
  isbn={9780521795401},
}

@inproceedings{dwivedi2020generalization,
  title={A Generalization of Transformer Networks to Graphs},
  author={Dwivedi, Vijay Prakash and Bresson, Xavier},
  booktitle={AAAI Workshop on Deep Learning on Graphs: Methods and Applications},
  year={2021}
}

@article{kreuzer2021rethinking,
  title={Rethinking graph transformers with spectral attention},
  author={Kreuzer, Devin and Beaini, Dominique and Hamilton, Will and L{\'e}tourneau, Vincent and Tossou, Prudencio},
  journal={Advances in Neural Information Processing Systems},
  volume={34},
  pages={21618--21629},
  year={2021}
}

@book{doyle1984random,
  title={Random walks and electric networks},
  author={Doyle, Peter G and Snell, J Laurie},
  volume={22},
  year={1984},
  publisher={American Mathematical Soc.}
}

@book{edelsbrunner2010computational,
  title={Computational topology: an introduction},
  author={Edelsbrunner, Herbert and Harer, John},
  year={2010},
  publisher={American Mathematical Soc.}
}

@article{zachary1977information,
  title={An information flow model for conflict and fission in small groups},
  author={Zachary, Wayne W},
  journal={Journal of anthropological research},
  volume={33},
  number={4},
  pages={452--473},
  year={1977},
  publisher={University of New Mexico}
}

@article{girvan2002community,
  title={Community structure in social and biological networks},
  author={Girvan, Michelle and Newman, Mark EJ},
  journal={Proceedings of the national academy of sciences},
  volume={99},
  number={12},
  pages={7821--7826},
  year={2002},
  publisher={The National Academy of Sciences}
}

@book{knuth1993stanford,
  title={The Stanford GraphBase: a platform for combinatorial computing},
  author={Knuth, Donald},
  volume={1},
  year={1993},
  publisher={AcM Press New York}
}

@inproceedings{
huang2026hogdiff,
title={{HOG}-Diff: Higher-Order Guided Diffusion for Graph Generation},
author={Yiming Huang and Tolga Birdal},
booktitle={The Fourteenth International Conference on Learning Representations},
year={2026}
}

@misc{krebs2004political,
  author = {Krebs, Valdis},
  title = {The Political Books Network},
  year = {2004},
  howpublished = {Unpublished},
  doi = {10.2307/40124305},
}

@article{bastian2026topological,
  title={Topological Neural Operators},
  author={Bastian, Lennart and Leventhal, Samuel and Hajij, Mustafa and Birdal, Tolga},
  journal={arXiv preprint arXiv:2606.09806},
  year={2026}
}

@article{lusseau2003bottlenose,
  title={The bottlenose dolphin community of doubtful sound features a large proportion of long-lasting associations: can geographic isolation explain this unique trait?},
  author={Lusseau, David and Schneider, Karsten and Boisseau, Oliver J and Haase, Patti and Slooten, Elisabeth and Dawson, Steve M},
  journal={Behavioral ecology and sociobiology},
  volume={54},
  number={4},
  pages={396--405},
  year={2003},
  publisher={Springer}
}

@article{holland1983stochastic,
  title={Stochastic blockmodels: First steps},
  author={Holland, Paul W and Laskey, Kathryn Blackmond and Leinhardt, Samuel},
  journal={Social networks},
  volume={5},
  number={2},
  pages={109--137},
  year={1983},
  publisher={Elsevier}
}

@inproceedings{gundert2014higher,
  title={Higher dimensional Cheeger inequalities},
  author={Gundert, Anna and Szedl{\'a}k, May},
  booktitle={Proceedings of the thirtieth annual symposium on Computational geometry},
  pages={181--188},
  year={2014}
}

@article{parzanchevski2016isoperimetric,
  title={Isoperimetric inequalities in simplicial complexes},
  author={Parzanchevski, Ori and Rosenthal, Ron and Tessler, Ran J},
  journal={Combinatorica},
  volume={36},
  number={2},
  pages={195--227},
  year={2016},
  publisher={Springer}
}

@article{alon1985lambda1,
  title={$\lambda$1, isoperimetric inequalities for graphs, and superconcentrators},
  author={Alon, Noga and Milman, Vitali D},
  journal={Journal of Combinatorial Theory, Series B},
  volume={38},
  number={1},
  pages={73--88},
  year={1985},
  publisher={Elsevier}
}

@article{calmon2023dirac,
  title={Dirac signal processing of higher-order topological signals},
  author={Calmon, Lucille and Schaub, Michael T and Bianconi, Ginestra},
  journal={New Journal of Physics},
  volume={25},
  number={9},
  pages={093013},
  year={2023},
  publisher={IOP Publishing}
}

@article{bick2023higher,
  title={What are higher-order networks?},
  author={Bick, Christian and Gross, Elizabeth and Harrington, Heather A and Schaub, Michael T},
  journal={SIAM review},
  volume={65},
  number={3},
  pages={686--731},
  year={2023},
  publisher={SIAM}
}

@article{gao2020hypergraph,
  title={Hypergraph learning: Methods and practices},
  author={Gao, Yue and Zhang, Zizhao and Lin, Haojie and Zhao, Xibin and Du, Shaoyi and Zou, Changqing},
  journal={IEEE transactions on pattern analysis and machine intelligence},
  volume={44},
  number={5},
  pages={2548--2566},
  year={2020},
  publisher={IEEE}
}

@article{sen2008collective,
  title={Collective classification in network data},
  author={Sen, Prithviraj and Namata, Galileo and Bilgic, Mustafa and Getoor, Lise and Galligher, Brian and Eliassi-Rad, Tina},
  journal={AI magazine},
  volume={29},
  number={3},
  pages={93--93},
  year={2008}
}

@inproceedings{yang2016revisiting,
  title={Revisiting semi-supervised learning with graph embeddings},
  author={Yang, Zhilin and Cohen, William and Salakhudinov, Ruslan},
  booktitle={International conference on machine learning},
  pages={40--48},
  year={2016},
  organization={PMLR}
}

@inproceedings{gilmer2017neural,
  title={Neural message passing for quantum chemistry},
  author={Gilmer, Justin and Schoenholz, Samuel S and Riley, Patrick F and Vinyals, Oriol and Dahl, George E},
  booktitle={International conference on machine learning},
  pages={1263--1272},
  year={2017},
  organization={Pmlr}
}

@article{schutt2017schnet,
  title={Schnet: A continuous-filter convolutional neural network for modeling quantum interactions},
  author={Sch{\"u}tt, Kristof and Kindermans, Pieter-Jan and Sauceda Felix, Huziel Enoc and Chmiela, Stefan and Tkatchenko, Alexandre and M{\"u}ller, Klaus-Robert},
  journal={Advances in neural information processing systems},
  volume={30},
  year={2017}
}

@article{shi2000normalized,
  title={Normalized cuts and image segmentation},
  author={Shi, Jianbo and Malik, Jitendra},
  journal={IEEE Transactions on pattern analysis and machine intelligence},
  volume={22},
  number={8},
  pages={888--905},
  year={2000},
  publisher={Ieee}
}

@article{kabsch1983dictionary,
  title={Dictionary of protein secondary structure: pattern recognition of hydrogen-bonded and geometrical features},
  author={Kabsch, Wolfgang and Sander, Christian},
  journal={Biopolymers: Original Research on Biomolecules},
  volume={22},
  number={12},
  pages={2577--2637},
  year={1983},
  publisher={Wiley Online Library}
}

@article{wang2025topotein,
  title={Topotein: Topological deep learning for protein representation learning},
  author={Wang, Zhiyu and Jamasb, Arian and Hajij, Mustafa and Morehead, Alex and Braithwaite, Luke and Li{\`o}, Pietro},
  journal={arXiv preprint arXiv:2509.03885},
  year={2025}
}

@article{memoli2022persistent,
  title={Persistent Laplacians: Properties, algorithms and implications},
  author={M{\'e}moli, Facundo and Wan, Zhengchao and Wang, Yusu},
  journal={SIAM Journal on Mathematics of Data Science},
  volume={4},
  number={2},
  pages={858--884},
  year={2022},
  publisher={SIAM}
}

@article{hajij2026copresheaf,
  title={Copresheaf topological neural networks: A generalized deep learning framework},
  author={Hajij, Mustafa and Bastian, Lennart and Osentoski, Sarah and Kabaria, Hardik and Davenport, John and Cherukuri, Balaji and Kocheemoolayil, Joseph and Shahmansouri, Nastaran and Lew, Adrian and Papamarkou, Theodore and others},
  journal={Advances in Neural Information Processing Systems},
  volume={38},
  pages={149759--149803},
  year={2026}
}

@inproceedings{korkmaz2025cumperlay,
  title={CuMPerLay: Learning Cubical Multiparameter Persistence Vectorizations},
  author={Korkmaz, Caner and Nuwagira, Brighton and Coskunuzer, Baris and Birdal, Tolga},
  booktitle={Proceedings of the IEEE/CVF International Conference on Computer Vision},
  pages={27084--27094},
  year={2025}
}

@article{danjun2026topoor,
  title={TopoOR: A Unified Topological Scene Representation for the Operating Room},
  author={Danjun Wang, Tony and Kim, Ka Young and Birdal, Tolga and Navab, Nassir and Bastian, Lennart},
  journal={arXiv e-prints},
  pages={arXiv--2603},
  year={2026}
}

@article{einizade2025continuous,
  title={Continuous simplicial neural networks},
  author={Einizade, Aref and Thanou, Dorina and Malliaros, Fragkiskos D and Giraldo, Jhony H},
  journal={arXiv preprint arXiv:2503.12919},
  year={2025}
}

@article{dorfler2012kron,
  title={Kron reduction of graphs with applications to electrical networks},
  author={Dorfler, Florian and Bullo, Francesco},
  journal={IEEE Transactions on Circuits and Systems I: Regular Papers},
  volume={60},
  number={1},
  pages={150--163},
  year={2012},
  publisher={IEEE}
}

@article{vigano2026root,
  title={Root-to-Leaf Path Random Walks, Normalized Hodge Laplacians, and Cheeger Inequalities on Simplicial Complexes},
  author={Vigan{\`o}, Francesco and Birdal, Tolga and Schaub, Michael T and Barahona, Mauricio},
  journal={arXiv preprint arXiv:2604.27241},
  year={2026}
}

@article{bianconi2021topological,
  title={The topological Dirac equation of networks and simplicial complexes},
  author={Bianconi, Ginestra},
  journal={Journal of Physics: Complexity},
  volume={2},
  number={3},
  pages={035022},
  year={2021},
  publisher={IOP Publishing}
}

@article{elmoataz2008nonlocal,
  title={Nonlocal discrete regularization on weighted graphs: a framework for image and manifold processing},
  author={Elmoataz, Abderrahim and Lezoray, Olivier and Bougleux, S{\'e}bastien},
  journal={IEEE transactions on Image Processing},
  volume={17},
  number={7},
  pages={1047--1060},
  year={2008},
  publisher={IEEE}
}

@article{pang2017graph,
  title={Graph Laplacian regularization for image denoising: Analysis in the continuous domain},
  author={Pang, Jiahao and Cheung, Gene},
  journal={IEEE Transactions on Image Processing},
  volume={26},
  number={4},
  pages={1770--1785},
  year={2017},
  publisher={IEEE}
}
